\documentclass[lettersize,journal]{IEEEtran}
\usepackage{amsmath,amsfonts}
\usepackage{amsthm}
\usepackage{algorithmic}
\usepackage{multirow}
\usepackage{booktabs}
\usepackage{bm}
\usepackage{array}
\usepackage{url}
\usepackage{graphicx}

\usepackage{caption}
\usepackage{subcaption}
\newlength\savewidth\newcommand\shline{\noalign{\global\savewidth\arrayrulewidth
		\global\arrayrulewidth 1pt}\hline\noalign{\global\arrayrulewidth\savewidth}}
\newtheorem{theorem}{Theorem}
\begin{document}
\title{Evaluating Roadside Perception for Autonomous Vehicles: Insights from Field Testing}
\author{Rusheng Zhang, Depu Meng,
Shengyin Shen, Tinghan Wang, Tai Karir, 
\\    Michael Maile, Henry X. Liu
\thanks{
This research was partially funded by the U.S. Department of Transportation (USDOT) Advanced Transportation and Congestion Management Technologies Deployment Award (693JJ32150006) and Mcity of the University of Michigan.
\textit{Corresponding authors: 
Rusheng Zhang and Henry X. Liu.}}
\thanks{R. Zhang, D. Meng, and H.X. Liu are with the Department
of Civil and Environmental Engineering, University of
Michigan, Ann Arbor, MI, 48109, USA (email: \{rushengz, depum, henryliu\}@umich.edu)}

\thanks{S. Shen is with the University of Michigan 
Transportation Research
Institude, 2901 Baxer Rd, Ann Arbor, MI, 48109, USA.
(email: shengyin@umich.edu)}

\thanks{T. Wang is with the Department of Mechanical Engineering, University of Michigan,
2350 Hayward St, Ann Arbor, MI 48109, USA.
(email: tinghanw@umich.edu)}

\thanks{T. Karir is with Huron High School, Ann Arbor, Michigan, USA.
(email: tai.karir@gmail.com)}

\thanks{M. Maile is with Ivie Communications, (Email: michael.a.maile@gmail.com)}


\thanks{H. X. Liu is also with Mcity of the University of Michigan.}

}


\maketitle

\begin{abstract}
Roadside perception systems are increasingly crucial in enhancing traffic safety and facilitating cooperative driving for autonomous vehicles. Despite rapid technological advancements, a major challenge persists for this newly arising field: the absence of standardized evaluation methods and benchmarks for these systems. This limitation hampers the ability to effectively assess and compare the performance of different systems, thus constraining progress in this vital field. This paper introduces a comprehensive evaluation methodology specifically designed to assess the performance of roadside perception systems. Our methodology encompasses measurement techniques, metric selection, and experimental trial design, all grounded in real-world field testing to ensure the practical applicability of our approach.

We applied our methodology in Mcity\footnote{\url{https://mcity.umich.edu/}}, a controlled testing environment, to evaluate various off-the-shelf perception systems. This approach allowed for an in-depth comparative analysis of their performance in realistic scenarios, offering key insights into their respective strengths and limitations. The findings of this study are poised to inform the development of industry-standard benchmarks and evaluation methods, thereby enhancing the effectiveness of roadside perception system development and deployment for autonomous vehicles. We anticipate that this paper will stimulate essential discourse on standardizing evaluation methods for roadside perception systems, thus pushing the frontiers of this technology. Furthermore, our results offer both academia and industry a comprehensive understanding of the capabilities of contemporary infrastructure-based perception systems.
\end{abstract}

\begin{IEEEkeywords}
Roadside Perception Systems, Automated Vehicles, Standardized Evaluation Method, Roadside Perception System Benchmarks
\end{IEEEkeywords}

\section{Introduction}

A key component of autonomous driving is the perception system, which enables the vehicle to sense and understand its surroundings. However, the onboard perception system, which relies on sensors mounted on the vehicle, such as cameras, LiDARs and radars, may face limitations in complex scenarios, harsh weather, and lighting conditions, due to occlusions, blind spots, sensor noise and environmental diversity. Therefore, a complementary roadside perception system is needed to enhance the onboard perception and provide more systematic and reliable scene information. Roadside perception leverages sensors installed on the roadside infrastructure to detect and track vehicles and other objects in the region of interest, and communicate the perception results to the onboard system via vehicle-to-infrastructure (V2I) communication technologies \cite{smartroad2021,china2022smart}.

Roadside perception is a rising field that has attracted increasing attention from both academia and industry in recent years. Several off-the-shelf products have been introduced to the market, which provide roadside perception solutions based on camera, radar and LiDAR sensors. For example, Derq \cite{derq2020} offers AI-driven roadside perception software that can detect and predict road users’ behavior and prevent collisions. Ouster \cite{ouster2020}  produces high-resolution digital LiDAR sensors that can be used for roadside perception and V2X communication. These examples represent only a small fraction of the burgeoning market of off-the-shelf products aimed at enhancing roadside perception for autonomous driving. There are numerous other companies, as detailed in \cite{smartroad2021, china2022smart}, contributing innovative solutions in this sector. Together, they illustrate both the immense potential this field holds for the future of autonomous driving.

Despite the burgeoning market of roadside perception systems, there currently exists a notable deficit in this field: the absence of a standardized, fair comparison between these diverse systems. Various off-the-shelf products, each with their unique features and performance claims, are currently operating without a clear means of direct, fair comparison. This could result in an unregulated landscape, where the absence of fair competition hampers innovation and quality assurance in roadside perception systems.

In response to this pressing need, this paper presents an evaluation methodology specifically devised for a systematic assessment of roadside perception systems. The proposed methodology encompasses aspects such as measurement techniques, metrics selection, and experimental trial design, all derived from field testing experience. Importantly, these components are deeply rooted in real-world conditions, making this evaluation methodology not only practical but also readily implementable. The focus on real-world application ensures that our proposed system can serve as a useful tool in advancing fair comparison in this field.

To illustrate the viability of our proposed methodology, we implemented it within Mcity, a controlled yet realistic testing environment \cite{MCity}. We evaluated three off-the-shelf perception systems, providing a practical demonstration of how our methodology enables detailed analysis of system performance. By evaluating these three distinct off-the-shelf perception systems within Mcity, our goal is more than just sharing the outcomes of these assessments. In fact, these evaluations serve as a practical illustration of a standardized methodology for assessing roadside perception systems. Our aim is to make a significant stride towards establishing standardized benchmarks for roadside perception systems. We believe that such an advancement will bring substantial benefits to both industry and academia, providing a solid foundation for further research and development in this sector.

\section{Related Works}

Roadside perception systems have experienced remarkable progress in recent years, driven by the dynamic advancements in autonomous driving technologies \cite{smartroad2021,china2022smart}. These systems enhance the onboard vehicle perception, offering more reliable and precise scene comprehension, especially in challenging scenarios involving complex environments or severe weather and lighting conditions \cite{tsukada2020networked,zhang2022design}. Notably, the academic literature presents a variety of roadside perception systems, utilizing different sensor technologies. This includes systems based on LiDAR \cite{gong2021pedestrian}, and camera-based systems \cite{zhang2022design, zhang2020robust, zou2022real}, each offering distinct advantages in capturing environmental data.

One of the applications of roadside perception systems is cooperative driving, which refers to the coordination and collaboration among vehicles and infrastructure to achieve safer and more efficient traffic flow. Cooperative perception aims to share locally perceived data with other vehicles and roadside infrastructure, and is one of the necessary components of cooperative driving~\cite{han2023collaborative}. This technology can enhance the situational awareness of the vehicles, overcome the limitations of onboard sensors, such as long-range, occlusion, and blind-spot issues \cite{tsukada2020networked,zhang2022design}, and increase the accuracy and robustness of detection results~\cite{su2023uncertainty, lei2022latency, vadivelu2021learning}.

Even as cooperative perception technologies advance and an array of off-the-shelf products emerge, a critical hurdle remains in the field of roadside perception systems: the absence of standardized evaluation methods and benchmarks. Related fields, including autonomous driving and object tracking, have established substantial benchmarks and evaluation metrics that have greatly propelled research and application. For instance, in the domain of autonomous driving, several resources such as the KITTI dataset \cite{geiger2012we}, nuScenes \cite{nuscenes2019}, and the Waymo Open Dataset \cite{sun2020scalability}, have emerged as pivotal benchmarks for academic and industrial progression. These resources have also played a crucial role in the development of standardized metrics to accurately assess the performance of autonomous driving systems. A parallel trend is observed in the sphere of video-based multiple object tracking (MOT), where challenges like MOT16/17/20 \cite{milan2016mot16,dendorfer2020mot20}, and TAO \cite{dave2020tao} have defined rigorous, objective and fair metrics to measure performance. Despite these advancements in related fields, there is a pressing need to formulate or adapt such comprehensive evaluation methodologies specifically for roadside perception systems.

Standardized evaluation of roadside perception systems presents notable complexities, especially compared to traditional benchmarking methodologies used for MOT or onboard detection algorithms. These complexities originate from the variety of sensors employed, such as cameras, LiDAR, and radar, as well as their assorted combinations. Additionally, practical limitations in accessing the internal algorithms of these systems further compound the challenges faced.

\section{Problem Formulation}
\label{sec:problem-formulation}
The assessment process is carried out in Mcity, a controlled testing environment that replicates realistic urban road conditions.
The perception systems are strategically deployed at a chosen intersection within Mcity. The positioning of the sensors is meticulously optimized to suit the unique detection capabilities of each sensor type, ensuring the best possible view of road user movements (Refer to Figure \ref{fig:mcity} for the sensor placement and an aerial view of Mcity).
 The evaluation of vehicle detection is facilitated through the use of two autonomous vehicles, each equipped with Real-Time Kinematic (RTK) Global Positioning System (GPS) technology. The RTK GPS provides highly precise trajectory data of the vehicle, with accuracy down to a few centimeters. This level of precision considerably surpasses the accuracy typically attainable by roadside perception systems and more than adequately meets the requirements for autonomous driving. Therefore, we use these recordings from RTK GPS as the ground truth in our evaluation process.
Regarding pedestrian detection, a portable RTK GPS device will be carried by the pedestrian to capture their precise trajectory, thereby providing the ground truth for pedestrian movement. An experimental vehicle and its setup used in our experiment can be seen in Figure \ref{fig:experiment-vehicle}.

\begin{figure*}[ht]
    \centering
    \begin{subfigure}{.5\textwidth}
        \centering
        \includegraphics[width=.9\linewidth]{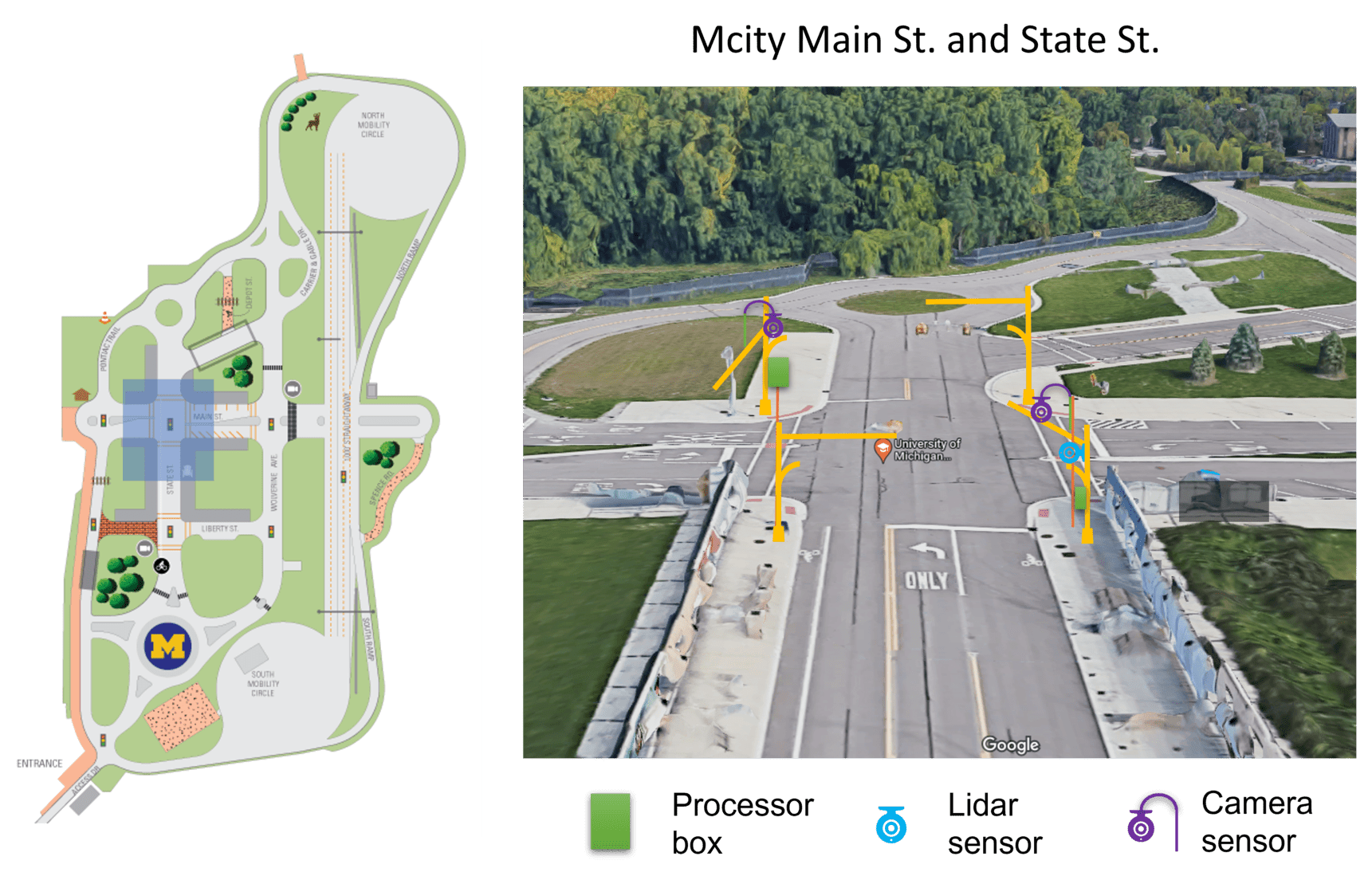} 
        \caption{Sensor placement and a view of Mcity}
        \label{fig:mcity}
    \end{subfigure}%
    \begin{subfigure}{.5\textwidth}
        \centering
        \includegraphics[width=.9\linewidth]{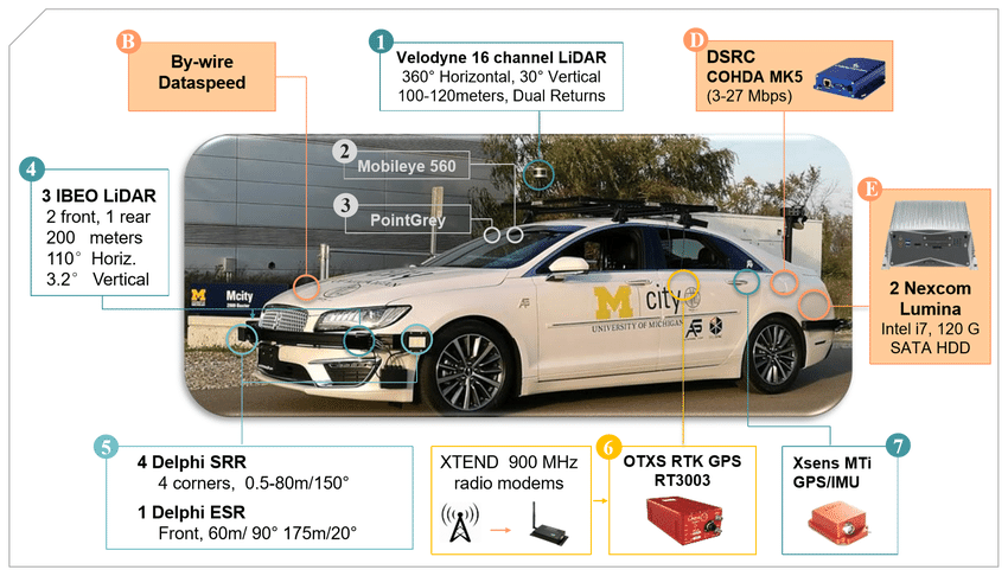} 
        \caption{Depiction of the experimental vehicle setup}
        \label{fig:experiment-vehicle}
    \end{subfigure}
    \caption{Illustrations of the experimental setup for roadside perception systems testing in Mcity.}
    \label{fig:main}
\end{figure*}

Given the absence of direct access to the systems under evaluation and the 'black box' nature of their perception algorithms, the evaluation focuses solely on the final output of the systems. 
We operate under the assumption that the perception system will periodically output a list of detected entities. Each entry in this list should contain a minimum of four attributes: latitude, longitude, category (distinguishing between vehicle or pedestrian), and a unique identifier (id). During each experimental trial, we collect two sets of data: the detected trajectories from the perception system and the corresponding ground truth trajectories obtained from the RTK GPS data.
The primary objective and problem being addressed in this paper is the establishment of a standardized, full-stack evaluation solution that hinges on the use of detection data and ground truth data collected as described above. The goal is to systematically  evaluate, fairly compare, and establish benchmarks for roadside perception systems. 

\section{Methodology}
In this section, we will explore three essential parts of our proposed evaluation methodology. The first part centers around measurement techniques. Here, we introduce a method for estimating two key properties: sensor latency and positioning error. These values are convoluted within the raw experimental data and can't be separately measured, necessitating the development of an estimation method. 
The second element focuses on our selection of metrics. During this portion, we introduce the specific metrics that have been chosen to evaluate the performance of roadside perception systems. The final part pertains to the design of our experimental trials, specifying the planned movement patterns of vehicles and pedestrians during each trial. 

\subsection{Measurement Techniques}

Our evaluation method is solely executed based on the detection data and ground truth data gathered as described in the Section \ref{sec:problem-formulation}. The latency and positioning error values are intrinsically interwoven within the data, making their direct measurement unfeasible. 
Consequently, in this subsection, we focus on our developed estimation method specifically designed to assess these two critical values.

\subsubsection{Assumptions and Mathematical Modeling}
Considering a scenario where a single vehicle is traversing along a one-dimensional straight line, the vehicle's location at a specific time $t$ is denoted as $G(t)$. Concurrently, we denote the detected location at  time $t$ as $D(t)$. Given a specific time, $\mathbf{t_1}$, the vehicle's location is $G(\mathbf{t_1})$. Simultaneously, we can identify a detection point generated by the detection system at the same location, timestamped as $\mathbf{t_2}$. This association establishes the relationship $G(\mathbf{t_1}) = D(\mathbf{t_2})$. We denote the latency at this point as $\mathbf{l}$. The following equation emerges:

\begin{equation}
D(\mathbf{t_2}) = G(\mathbf{t_2}-\mathbf{l}) + e_1 + \mathbf{e_2} = G(\mathbf{t_1})
\label{eq:1}
\end{equation}

In this equation, $e_1$ represents the detection system's constant offset, and $\mathbf{e_2}$ represents the random error, which is a zero-mean random variable. It's crucial to note that Equation (\ref{eq:1}) always holds true, as any error can be deconstructed into a static offset plus a zero-mean random error.

\subsubsection{Latency Measurement}
\label{sss:latency-measurement}
Now, let's consider a scenario where the vehicle is traveling at a constant speed, $v_0$. This results in:

\begin{equation}
G(t) = v_0t
\label{eq:2}
\end{equation}

Substituting Equation (\ref{eq:2}) into Equation (\ref{eq:1}) and denoting $\bm{\tau} = \mathbf{t_2} - \mathbf{t_1}$, we derive:

\begin{equation}
\bm{\tau}= \mathbf{t_2} - \mathbf{t_1}
= \mathbf{l} - \frac{e_1}{v_0} - \frac{\mathbf{e_2}}{v_0}
\label{eq:3}
\end{equation}

It's essential to observe that in Equation (\ref{eq:3}), $\bm{\tau, l, e_2}$ are random variables, while the rest are static terms. If we apply the expectation operation to both sides of Equation (\ref{eq:3}), we get:

\begin{equation}
E[\bm\tau] = E[\bm{l}] - \frac{e_1}{v_0}
\label{eq:4}
\end{equation}

Similarly, when the vehicle is traveling at speed $-v_0$:

\begin{equation}
E[\bm\tau'] = E[\bm{l}] + \frac{e_1}{v_0}
\label{eq:5}
\end{equation}

Adding Equations (\ref{eq:4}) and (\ref{eq:5}) yields:

\begin{equation}
E[\bm\tau] + E[\bm\tau'] = 2E[\bm{l}]
\label{eq:6}
\end{equation}

In Equation (\ref{eq:6}), $\bm\tau$ and $\bm\tau'$ are the two random variables that can be directly observed and sampled. Consequently, the expectation of these two random variables can be estimated, thereby allowing us to estimate the expected value of the latency.

Following this mathematical model, we designed an experiment to sample the variables $\bm\tau$ and $\bm\tau'$, and hence estimate the average latency. As depicted in Figure \ref{fig:latency-measurement}, one vehicle drives back and forth in a straight line during the trial. Each trip consists of three areas: an acceleration area, an area of constant speed, and a deceleration area. We have designed the experiment in such a manner that the area of constant speed is ideally located within the optimal detection range of the sensor under evaluation.
In this area, the driver exerts utmost effort to maintain a constant speed of either $v_0$ or $-v_0$. A series of test points are established within this area. At each point, we record the time difference between the detection timestamp and the ground truth timestamp ($\bm{t_2} - \bm{t_1}$ as in equation \ref{eq:3}). Subsequently, by calculating the average of all the sampled time differences, we use this value as the estimated average latency. It's important to note that this experiment must be conducted in pairs of trips (back and forth) in order to neutralize the static error term $\frac{e_1}{v_0}$ in equation (\ref{eq:4}). In practical implementation, a small speed variation is inevitable, Appendix \ref{appendix:1} gives a brief analysis on the effect of speed variation, and Appendix \ref{appendix:2} gives the analysis on the variance of this method.

\begin{figure*}[ht]
\centering
\includegraphics[width=.9\linewidth]{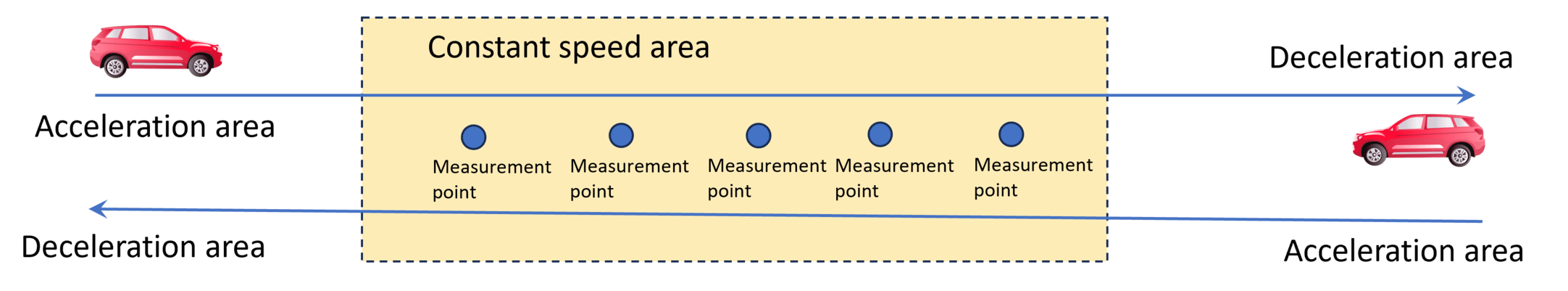}
\caption{Depiction of the experimental setup for latency measurement, showcasing the vehicle's movement path involving acceleration, constant speed, and deceleration areas.}
\label{fig:latency-measurement}
\end{figure*}

\subsubsection{Estimating Positioning Error}
\label{sss:positioning-error}
With the estimated average latency $\bm{\tilde{l}}$ from the method described above, we can compute the estimated positional error for a given detection point using the following estimator: $\bm{\Tilde{e_d}} = D(t) - G(t-\bm{\tilde{l}})$. To demonstrate the viability of this estimator, we present the following theorem, a brief analysis of its variance is also presented in Appendix \ref{appendix:3}.

\begin{theorem}
The estimator $\bm{\Tilde{e_d}}$ is an unbiased estimator of the positional error.
\end{theorem}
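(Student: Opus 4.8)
The plan is to reduce the claim to two ingredients that are already in hand: the additive decomposition of the detected position in Equation~(\ref{eq:1}) and the unbiasedness of the latency estimate that follows from Equation~(\ref{eq:6}). First I would pin down what ``the positional error'' means at a detection point timestamped $t$: using Equation~(\ref{eq:1}), the true positional error is $\bm{e_d} = D(t) - G(t-\bm{l}) = e_1 + \bm{e_2}$, and since $\bm{e_2}$ is zero-mean we have $E[\bm{e_d}] = e_1$. The target of the estimation is therefore the offset $e_1$, and the goal becomes showing $E[\bm{\Tilde{e_d}}] = e_1 = E[\bm{e_d}]$.

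Next I would substitute the decomposition into the estimator. Starting from $\bm{\Tilde{e_d}} = D(t) - G(t-\bm{\tilde l})$ and replacing $D(t)$ by $G(t-\bm{l}) + e_1 + \bm{e_2}$ gives $\bm{\Tilde{e_d}} = e_1 + \bm{e_2} + \bigl(G(t-\bm{l}) - G(t-\bm{\tilde l})\bigr)$. The essential simplification is the constant-speed model of Equation~(\ref{eq:2}): because $G$ is linear, $G(t-\bm{l}) - G(t-\bm{\tilde l}) = v_0(\bm{\tilde l} - \bm{l})$, so that $\bm{\Tilde{e_d}} = e_1 + \bm{e_2} + v_0(\bm{\tilde l} - \bm{l})$. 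This is an affine function of the random variables $\bm{e_2}, \bm{\tilde l}, \bm{l}$, with no product terms, so taking expectations term by term is legitimate and yields $E[\bm{\Tilde{e_d}}] = e_1 + E[\bm{e_2}] + v_0\bigl(E[\bm{\tilde l}] - E[\bm{l}]\bigr)$.

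I would then close the argument by killing the two correction terms. The first vanishes because $E[\bm{e_2}] = 0$ by assumption. For the second, I would invoke the unbiasedness of the latency estimator: by construction $\bm{\tilde l}$ is the pooled sample average of the forward and backward time differences $\bm\tau$ and $\bm\tau'$, so Equation~(\ref{eq:6}) gives $E[\bm{\tilde l}] = \tfrac{1}{2}\bigl(E[\bm\tau] + E[\bm\tau']\bigr) = E[\bm{l}]$, and the term $v_0(E[\bm{\tilde l}] - E[\bm{l}])$ is zero. Hence $E[\bm{\Tilde{e_d}}] = e_1 = E[\bm{e_d}]$, which is exactly unbiasedness.

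The main obstacle is the appearance of the random estimate $\bm{\tilde l}$ as an argument of the ground-truth map $G$: expectation does not commute with a nonlinear $G$, so an unbiased $\bm{\tilde l}$ by itself would not force an unbiased $\bm{\Tilde{e_d}}$ (a Jensen-type bias would survive in $E[G(t-\bm{\tilde l})]$). What rescues the argument is precisely the linearity granted by the constant-speed assumption, which collapses $G(t-\bm{l}) - G(t-\bm{\tilde l})$ to a single term linear in $\bm{\tilde l} - \bm{l}$. I would therefore state the constant-speed assumption explicitly and note that any statistical dependence between $\bm{\tilde l}$ (estimated from the latency trials) and the per-point error $\bm{e_2}$ is harmless here, since the expansion is first order and needs nothing beyond the zero-mean property of $\bm{e_2}$ and the stationarity of the latency distribution, so that the $E[\bm{l}]$ at the evaluation point coincides with the quantity estimated in Section~\ref{sss:latency-measurement}.
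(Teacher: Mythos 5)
Your proof is correct and follows essentially the same route as the paper's: both add and subtract $G(t-\bm{l})$ inside the estimator, identify $D(t)-G(t-\bm{l})=e_1+\bm{e_2}$ via Equation~(\ref{eq:1}), and use the constant-speed linearity of $G$ to collapse the remaining term to $v_0$ times a zero-mean quantity. The only difference is that you keep the latency estimate $\bm{\tilde l}$ as a random variable and justify $E[\bm{\tilde l}]=E[\bm{l}]$ through Equation~(\ref{eq:6}), whereas the paper substitutes $E[\bm{l}]$ for $\bm{\tilde l}$ outright; yours is the slightly more careful reading of the same argument, and your remark that linearity of $G$ is what prevents a Jensen-type bias is a useful observation the paper leaves implicit.
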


\begin{proof}
We start with the expectation of the estimator:
$$E[\bm{\Tilde{e_d}}] = E[D(t) - G(t-E[\bm{l}])]$$
Rearranging and splitting the terms gives us:
$$=E[D(t) - G(t-\bm{l}) + G(t-\bm{l}) - G(t-E[\bm{l}])]$$
Substituting the expression from equation (\ref{eq:1}) into the expectation, we get:
$$E[\bm{\Tilde{e_d}}] = E[e_1 + \bm{e_2}] + v_0E[E[\bm{l}] - \bm{l}] = e_1$$
Thus, we conclude that the estimator is unbiased.
\end{proof}

\subsection{Metrics Selection}
\subsubsection{Data Representation and Preprocessing}
The first step in our methodology requires an understanding of the fundamental concepts of data representation. To begin with, we define \textbf{Data Point} as point representation of a single measured object. At its core, a data point encapsulates at least three attributes: a timestamp, a location (latitude and longitude), and an ID. Data points serve as the most basic and atomic data structure in our evaluation data abstraction. 

Next, we introduce two essential data structures: data frame and trajectory. A Data Frame consists of multiple data points, all sharing an identical timestamp. It encapsulates a singular time frame, encompassing all detected or ground truth objects at that specific time instance. A Trajectory is a collection of data points bearing different timestamps (arranged in ascending order) but sharing the same ID. This structure illustrates the movement path or trajectory of a particular detected or ground truth object over time. The objective of preprocessing is to parse the detection and ground truth data into these structures. We group each set of data by time to create multiple time frames and by ID to form various trajectories. All these data structures are then collectively stored as a '\textbf{trajectory set}'. 

\subsubsection{Matching Mechanisms}
A significant portion of our chosen metrics rely on the correct alignment of data points, frames, and trajectories between the detection results and the ground truth data. For this purpose, we define two types of matching mechanisms: \textbf{Point Matching} and \textbf{Association Matching}.

\paragraph{\textbf{Point Matching}} In this process, we first establish a match between a data frame from the detected data and the ground truth data frame that possesses the closest timestamp (after substracting the estimated latency as described in Section \ref{sss:latency-measurement}). Subsequently, we execute a point-to-point matching between the data points in the detected data frame and the corresponding data points in the ground truth data frame. For this operation, we employ the Hungarian method \cite{kuhn1955hungarian}, which minimizes the matching distance and subsequently achieves an optimal one-to-one correspondence between the points in both frames.

\paragraph{\textbf{Association Matching}:} This matching mechanism amplifies the concept of point matching and applies it to trajectories. Here, we first adopt the Hungarian method to pair detected trajectories with the ground truth trajectories, by maximizing the number of true positives. Following this, we perform \textbf{point matching} within each pair of the matched trajectories.

The two matching mechanisms will yield a set of matched data point pairs from the detected data points to the ground truth data points. It's noteworthy that these matching mechanisms slightly diverge from those utilized in common image-based tracking benchmarks \cite{milan2016mot16,dendorfer2020mot20, dave2020tao}. In our case, we need to incorporate an extra nearest-time matching step. This distinction arises because, unlike standard scenarios where the ground truth is annotated on the identical frames, our ground truth is independently captured by a standalone GPS device, which operates asynchronously with the the detection system. 

\subsubsection{True Positives, False Positives, False Negatives, and ID Switches}
We employ standard terminology used in binary classification tasks to identify and analyze the outcomes of our matching process. An illustrative example is provided below (Figure \ref{fig:classification}) for further clarification.

\begin{itemize}
    \item \textbf{True Positives (TP)}: A True Positive instance arises when a detected data point is successfully matched with a ground truth data point, with the distance between the two not exceeding 1.5 meters. This indicates that the detection system has accurately identified an object and correctly estimated its location within an acceptable margin of error.
    
    \item \textbf{False Positives (FP)}: False Positives denote cases where a data point is detected, but no corresponding ground truth data point exists, or the distance between the detected point and the matched ground truth data point exceeds 1.5 meters. These scenarios suggest that the detection system has erroneously detected an object or incorrectly estimated an object's location beyond the acceptable tolerance.
    
    \item \textbf{False Negatives (FN)}: False Negatives occur when a ground truth data point exists, but the detection system fails to identify a corresponding detected data point. This represents a detection failure by the system.
    
    \item \textbf{Identity Switches (ID Switches)}: An ID Switch happens when the identity assigned to a specific trajectory by the detection system gets wrongly associated with another trajectory. In other words, the tracking system loses track of the original object and erroneously associates the track of a different object to it.
\end{itemize}

The matched pairs of points discussed above can be the outcome of either \textbf{Point Matching} or \textbf{Association Matching}. For simplicity, we denote the true positives, false positives, false negatives obtained from \textbf{Point Matching} as TP, FP, FN, and those from \textbf{Association Matching} as TPA, FPA, and FNA. The 1.5m threshold in accordance with the SAE2945 standard \cite{sae2016board}.




\begin{figure*}[ht]
\centering
\begin{subfigure}[b]{0.42\textwidth}
    \centering
    \includegraphics[width=\textwidth]{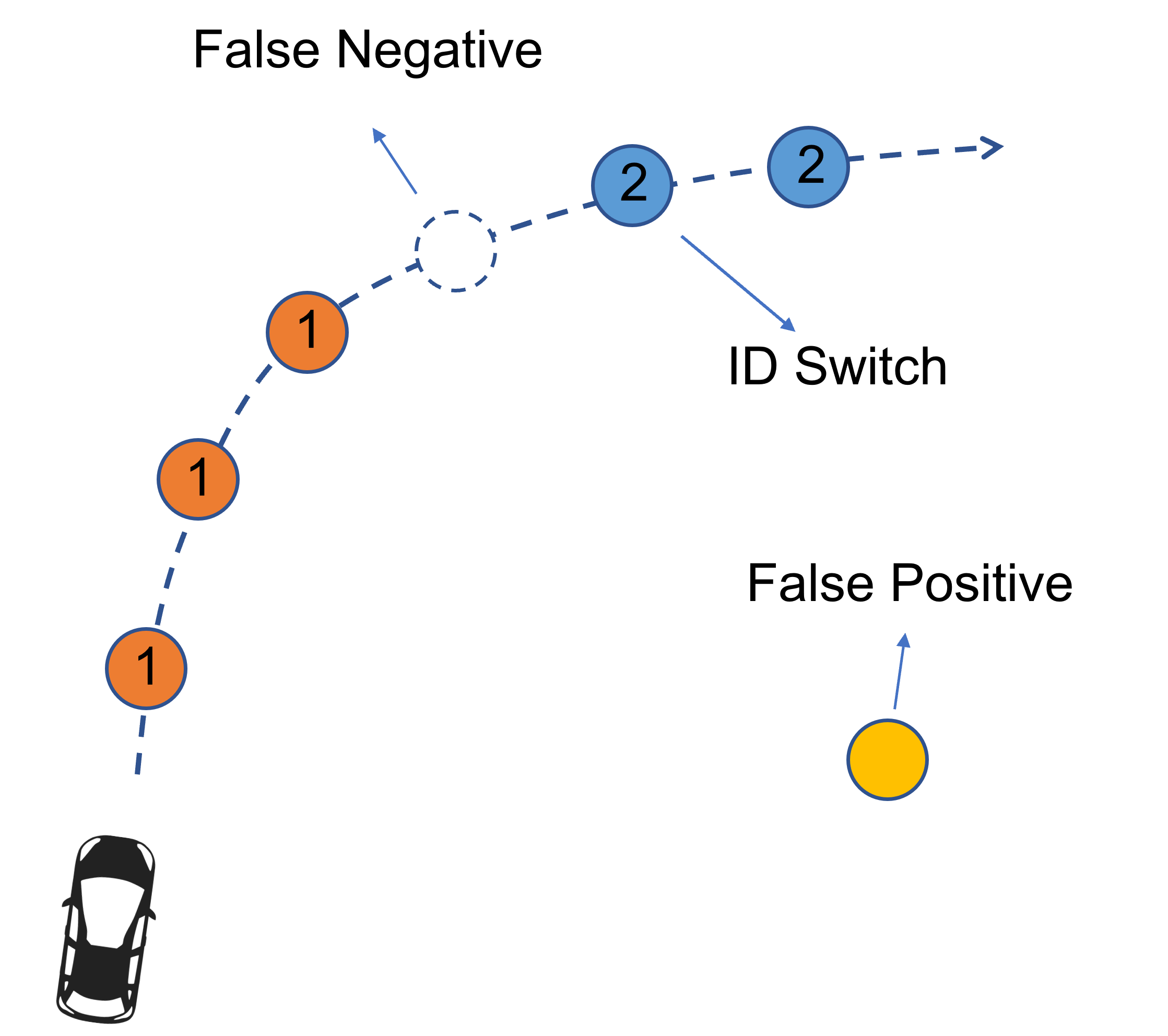}
    \caption{Illustration of True Positives (TP), False Positives (FP), False Negatives (FN), and Identity Switches (ID Switches).}
    \label{fig:point_matching}
\end{subfigure}
\hfill
\begin{subfigure}[b]{0.54\textwidth}
    \centering
    \includegraphics[width=\textwidth]{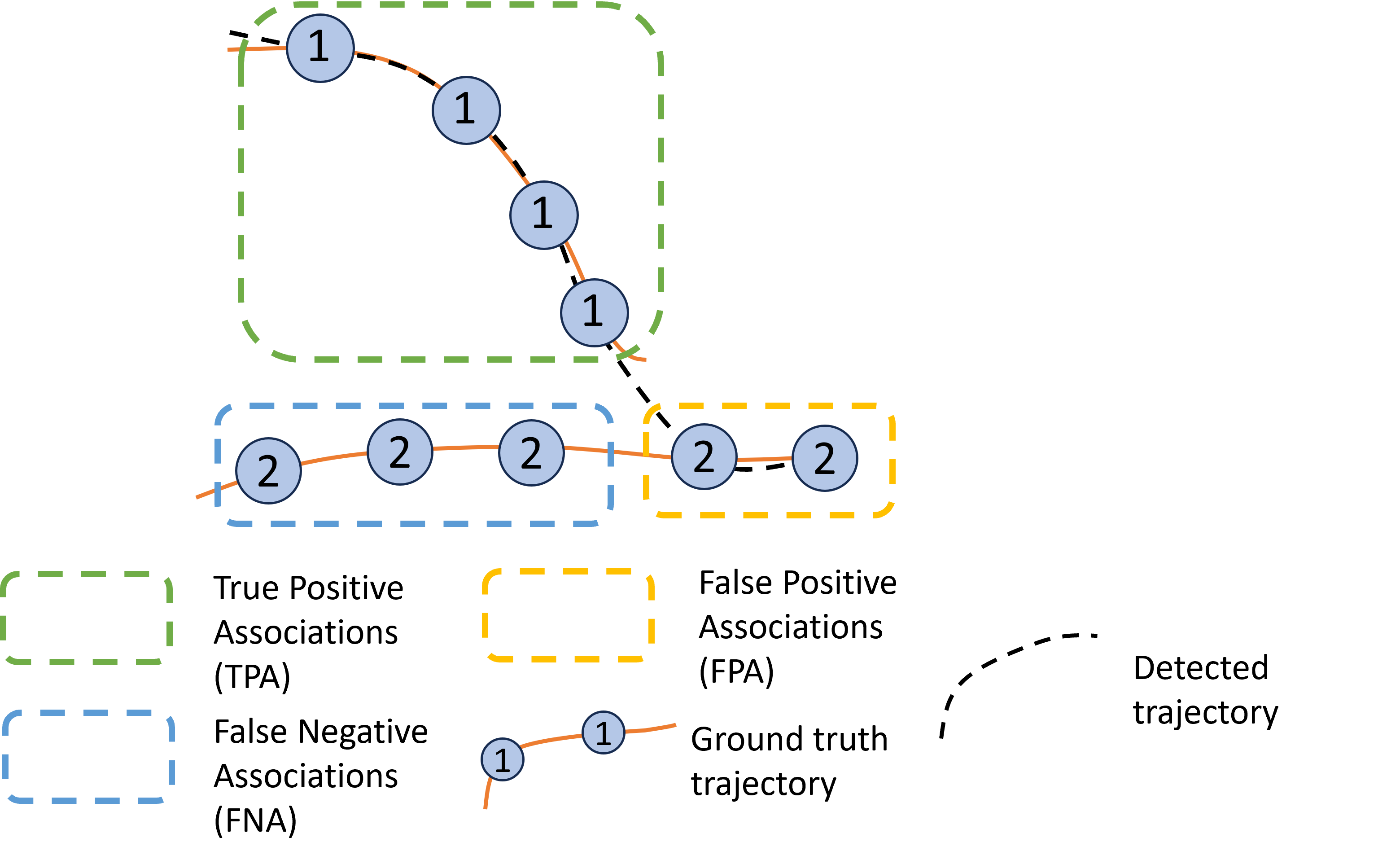}
    \caption{Illustration of True Positives Associations  (TPA), False Positives Associations (FPA), and False Negatives Associations (FNA) in the context of Association Matching.}
    \label{fig:association_matching}
\end{subfigure}
\caption{Illustration of True Positives, False Positives, False Negatives, and Identity Switches in the context of our matching process.}
\label{fig:classification}
\end{figure*}

\subsubsection{Multiple Object Tracking Precision (MOTP)}
Multiple Object Tracking Precision (MOTP) \cite{ristani2016performance} is a measure of the tracking algorithm's precision in localizing the detected objects. It is computed as the total distance between each True Positive (TP) and its corresponding ground truth object, divided by the total number of True Positives. Mathematically, it can be defined as:

\begin{equation}
MOTP = \frac{\sum_{t,i}d_{t,i}}{\sum_t c_t}
\end{equation}

where $d_{t,i}$ is the distance of the $i$th match in frame $t$, and $c_t$ is the number of matches in frame $t$. Lower MOTP values correspond to better localization precision of the detection algorithm.

\subsubsection{Multiple Object Tracking Accuracy (MOTA)}
Multiple Object Tracking Accuracy (MOTA) \cite{ristani2016performance} evaluates the overall tracking performance, taking into account False Positives (FP), False Negatives (FN), and Identity Switches (IDSw). It is calculated as follows:

\begin{equation}
MOTA = 1 - \frac{\sum_t (FP_t + FN_t + IDSw_t)}{\sum_t g_t}
\end{equation}

where $FP_t$ is the number of false positives in frame $t$, $FN_t$ is the number of false negatives in frame $t$, $IDSw_t$ is the number of identity switches in frame $t$, and $g_t$ is the number of ground truth objects in frame $t$. Higher MOTA scores indicate better tracking accuracy. Notice the FP, FN are calculated with \textbf{point matching}.

\subsubsection{IDF1 Score}

The IDF1 score \cite{ristani2016performance} is a measure that encapsulates both the precision and recall of the identification process in multi-object tracking. It is particularly designed for tasks where maintaining consistent identities of the tracked objects is of significant importance.

To elaborate further, the IDF1 score is the harmonic mean of Identification Precision (IDP) and Identification Recall (IDR):

\begin{itemize}
    \item \textbf{Identification Precision (IDP)}: IDP is the proportion of correctly identified detections out of all detections. Mathematically, it's defined as $IDP = \frac{TPA}{TPA+FPA}$, where $TPA$ is the number of true positive associations and $FPA$ is the number of false positive associations.
    
    \item \textbf{Identification Recall (IDR)}: IDR, on the other hand, is the ratio of correctly identified detections to the total number of ground truth objects. It's defined as $IDR = \frac{TPA}{TPA+FNA}$, with $FNA$ representing the number of false negative associations.
\end{itemize}

Combining these two measures, the IDF1 score is computed as:

\begin{equation}
IDF1 = \frac{2 \times IDP \times IDR}{IDP + IDR} = \frac{2 \times TPA}{2 \times TPA + FPA + FNA}
\end{equation}

In essence, a high IDF1 score indicates that not only the detection system correctly identified and located the objects, but also consistently maintained their identities throughout the tracking process.

\subsubsection{Higher Order Tracking Accuracy (HOTA)}

Higher Order Tracking Accuracy (HOTA) \cite{luiten2021hota} is a comprehensive metric that accounts for both detection and association accuracy in object tracking scenarios. HOTA is based on two core components: Detection Accuracy (DetA) and Association Accuracy (AssA).

Detection Accuracy (DetA) assesses the ability of the system to correctly detect objects, without considering their identities. The formula to calculate DetA is given by:

\begin{equation}
DetA = \frac{TP}{TP + FP + FN}
\end{equation}

Association Accuracy (AssA), on the other hand, evaluates the capacity of the system to correctly associate detections to the same object over time, in essence, evaluating the tracking aspect. AssA is computed using:

\begin{equation}
AssA = \frac{TPA}{TPA + FPA + FNA}
\end{equation}


The overall HOTA score is then calculated by taking the geometric mean of DetA and AssA, which balances both detection and association aspects.

\subsection{Experiment Trial Design}
The experimental trials are designed to evaluate different metrics across a diverse range of scenarios. For this purpose, we choose the intersection of State Street and Main Street in Mcity as our trial location. This intersection, with two lanes on each street, represents a medium-sized intersection scenario that is commonplace in real-world urban transportation networks. The trials are organized into four distinct categories: latency trials, one-vehicle trials, one-vehicle-with-pedestrian trials, and two-vehicle-with-pedestrian trials. 

The first two trials are designed specifically for latency evaluation. Each of these trials involves a vehicle driving back and forth in a straight line, maintaining a consistent speed within predefined constant speed zones. The first trial is conducted along the east-west direction, while the second trial follows the north-south direction. These trials are designed according to section \ref{sss:latency-measurement} and shown in Fig. \ref{fig:latency_trial}. The latency for each trial is estimated independently, and the final latency value is calculated as the average of the two trials. This estimated latency is then utilized in the evaluation of subsequent trials.

The second set of trials comprises eight individual trials, each involving a single vehicle. These trials cover all possible maneuvers at the intersection, providing a comprehensive examination of the detection system's performance in diverse vehicle movement scenarios as shown in Fig. \ref{fig:one_vehicle_trial}.
This collection of trials aims to provide direct observation on the system's geographical coverage and its performance in handling diverse movement scenarios. Moreover, these trials offer detailed, granular insights into the system's performance by gauging its specific responsiveness to individual vehicle movements. By examining the system's performance under these conditions, we can gain a nuanced understanding of its capabilities and potential areas for improvement.

The subsequent pair of trials incorporates both a vehicle and a pedestrian. One trial directs the vehicle along an east-west axis, while the alternate trial positions the vehicle on a north-south trajectory. These trials, as depicted in Fig. \ref{fig:one_vehicle_pedestrian_trial}, have been crafted to examine the detection system's performance under a mixed scenario, where both vehicles and pedestrians are in play.
The system's performance under these scenarios directly informs us of its capabilities in ensuring the safety of vulnerable road users (VRUs) when implemented in a production setting. 

The final set of trials involves three tests, each involving two vehicles and a pedestrian. Two trials simulate car-following situations (one in the east-west direction, as shown in Fig. \ref{fig:two_vehicle_pedestrian_trial_1} and the other in the north-south direction, as shown in Fig. \ref{fig:two_vehicle_pedestrian_trial_2}), and the third involves two vehicles approaching the intersection from perpendicular directions, as illustrated in Fig. \ref{fig:two_vehicle_pedestrian_trial_3}. 
These trials are designed to provide an evaluation of the system under complex scenarios. Factors such as occlusion of the following vehicle by the leading vehicle in car-following situations and the need to distinguish multiple detected objects pose additional challenges. Particularly, the trial involving perpendicularly approaching vehicles tests the system's capacity to warn of intersection crashes and red-light violations. 





\begin{figure*}[ht]
\centering
\begin{subfigure}{.32\textwidth}
\centering
\includegraphics[width=.99\textwidth]{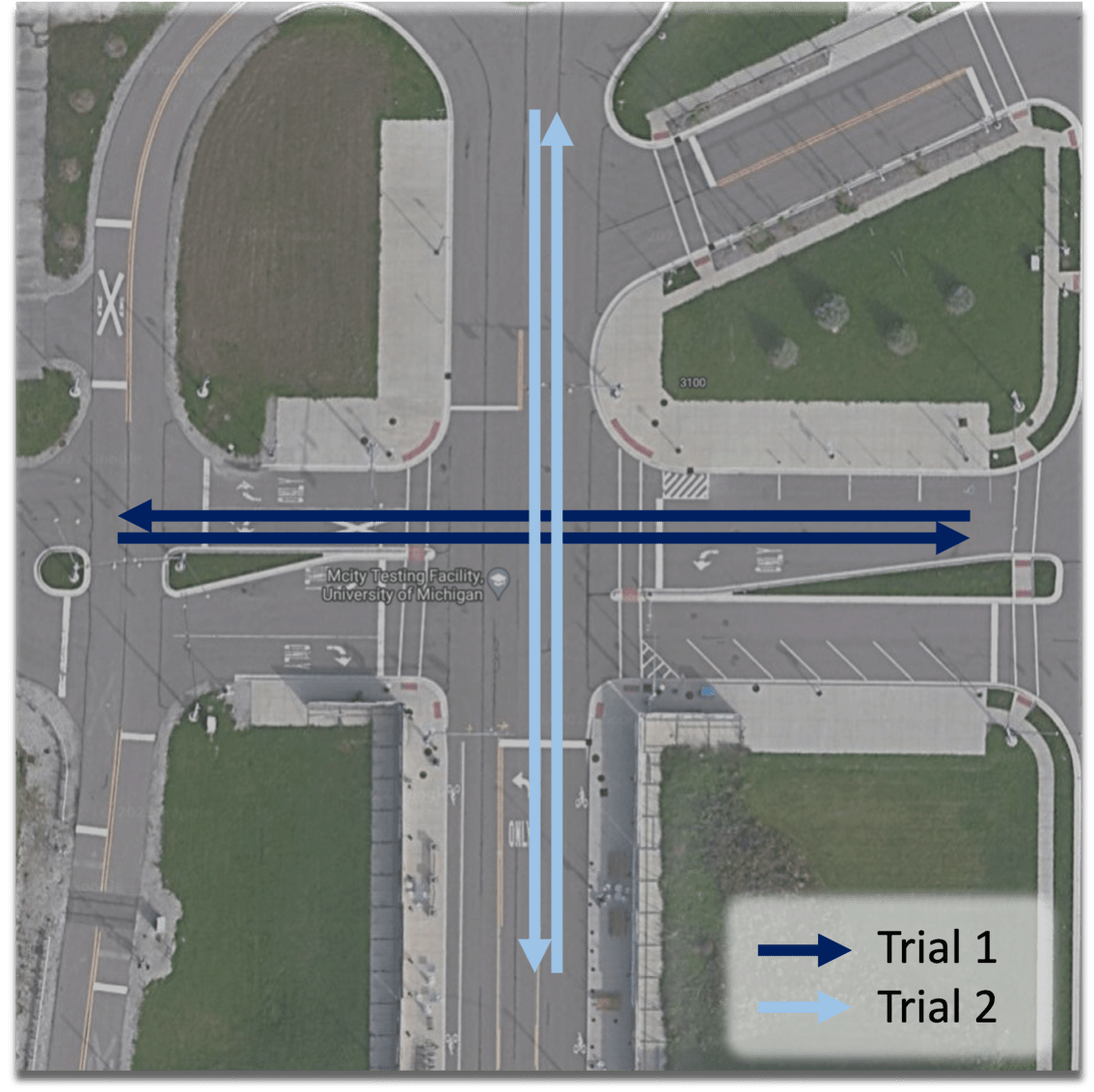}
\caption{Latency Trials}
\label{fig:latency_trial}
\end{subfigure}
\begin{subfigure}{.32\textwidth}
\centering
\includegraphics[width=.99\textwidth]{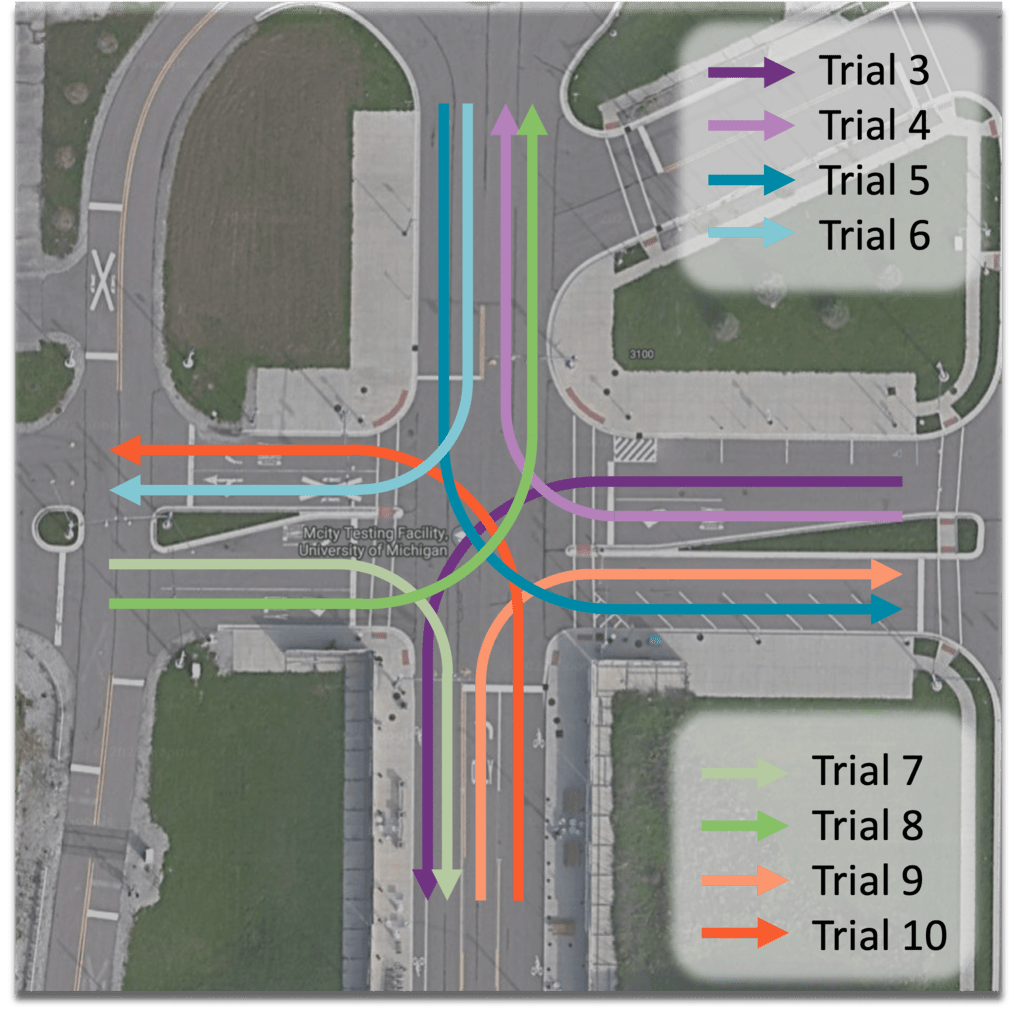}
\caption{One Veh. Trials}
\label{fig:one_vehicle_trial}
\end{subfigure}
\begin{subfigure}{.32\textwidth}
\centering
\includegraphics[width=.99\textwidth]{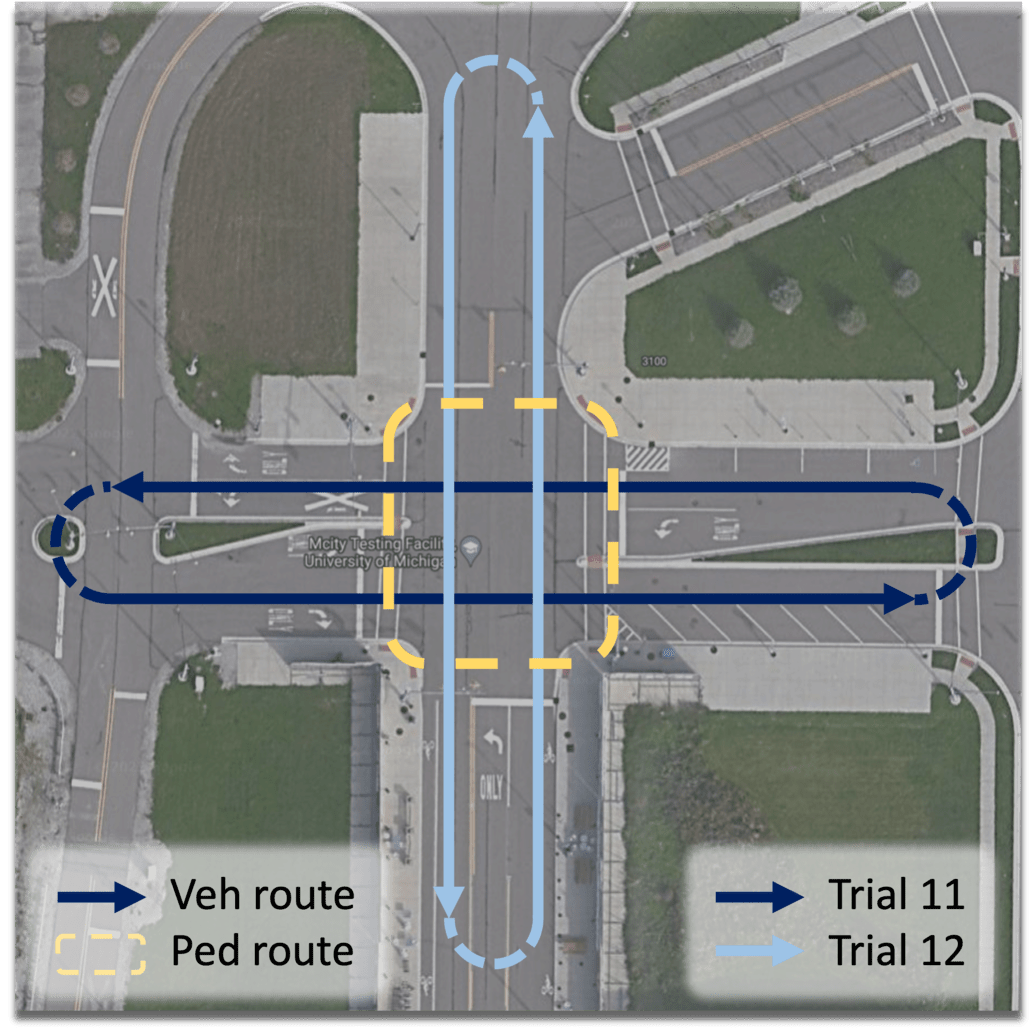}
\caption{One Veh. with Ped. Trials}
\label{fig:one_vehicle_pedestrian_trial}
\end{subfigure}
\begin{subfigure}{.32\textwidth}
\centering
\includegraphics[width=.99\textwidth]{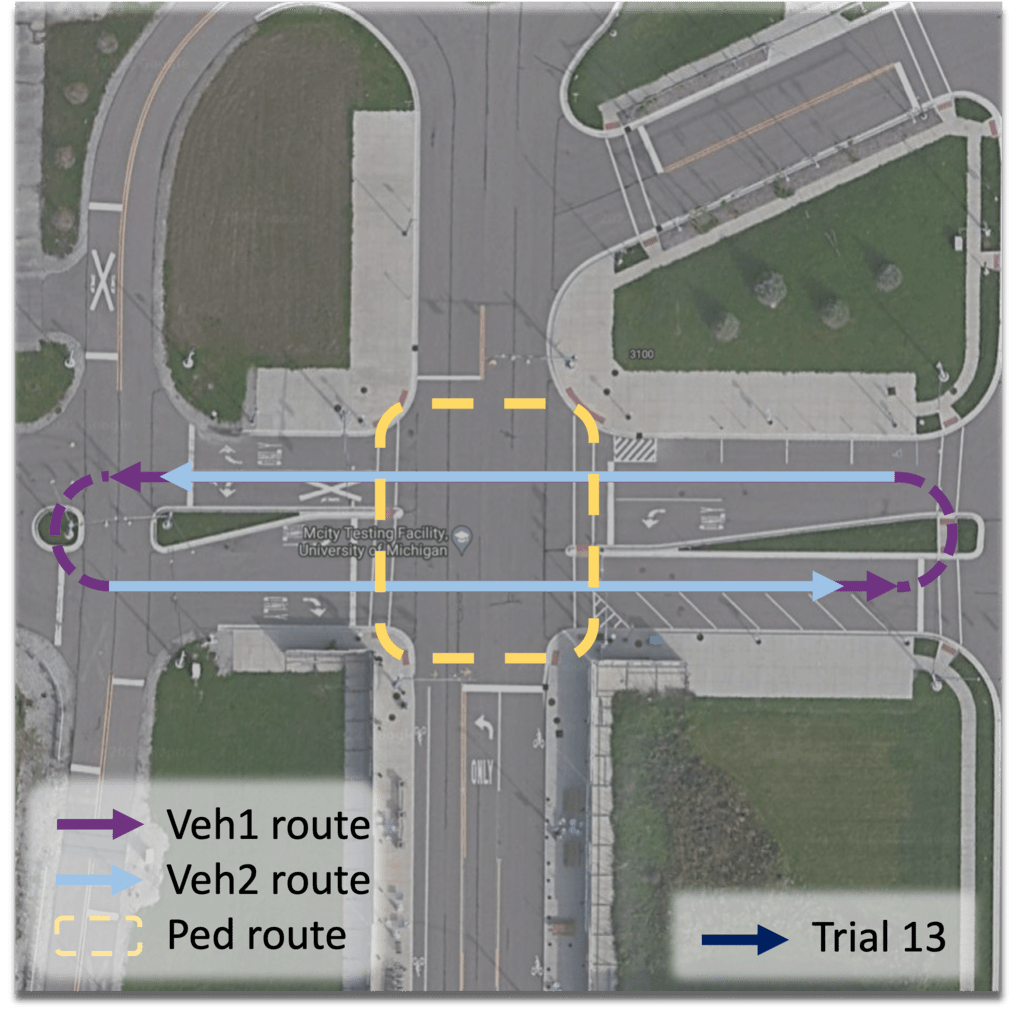}
\caption{Two Veh. with Ped. Trial - 1}
\label{fig:two_vehicle_pedestrian_trial_1}
\end{subfigure}
\begin{subfigure}{.32\textwidth}
\centering
\includegraphics[width=.99\textwidth]{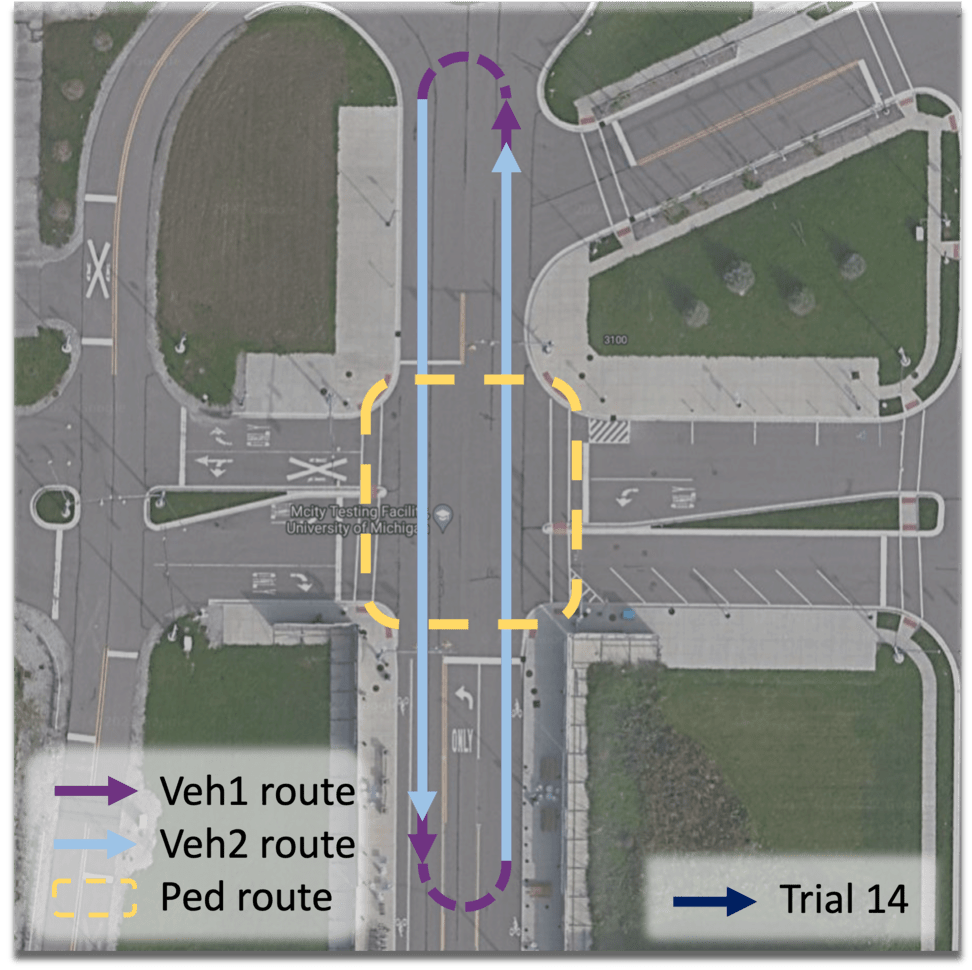}
\caption{Two Veh. with Ped. Trial - 2}
\label{fig:two_vehicle_pedestrian_trial_2}
\end{subfigure}
\begin{subfigure}{.32\textwidth}
\centering
\includegraphics[width=.99\textwidth]{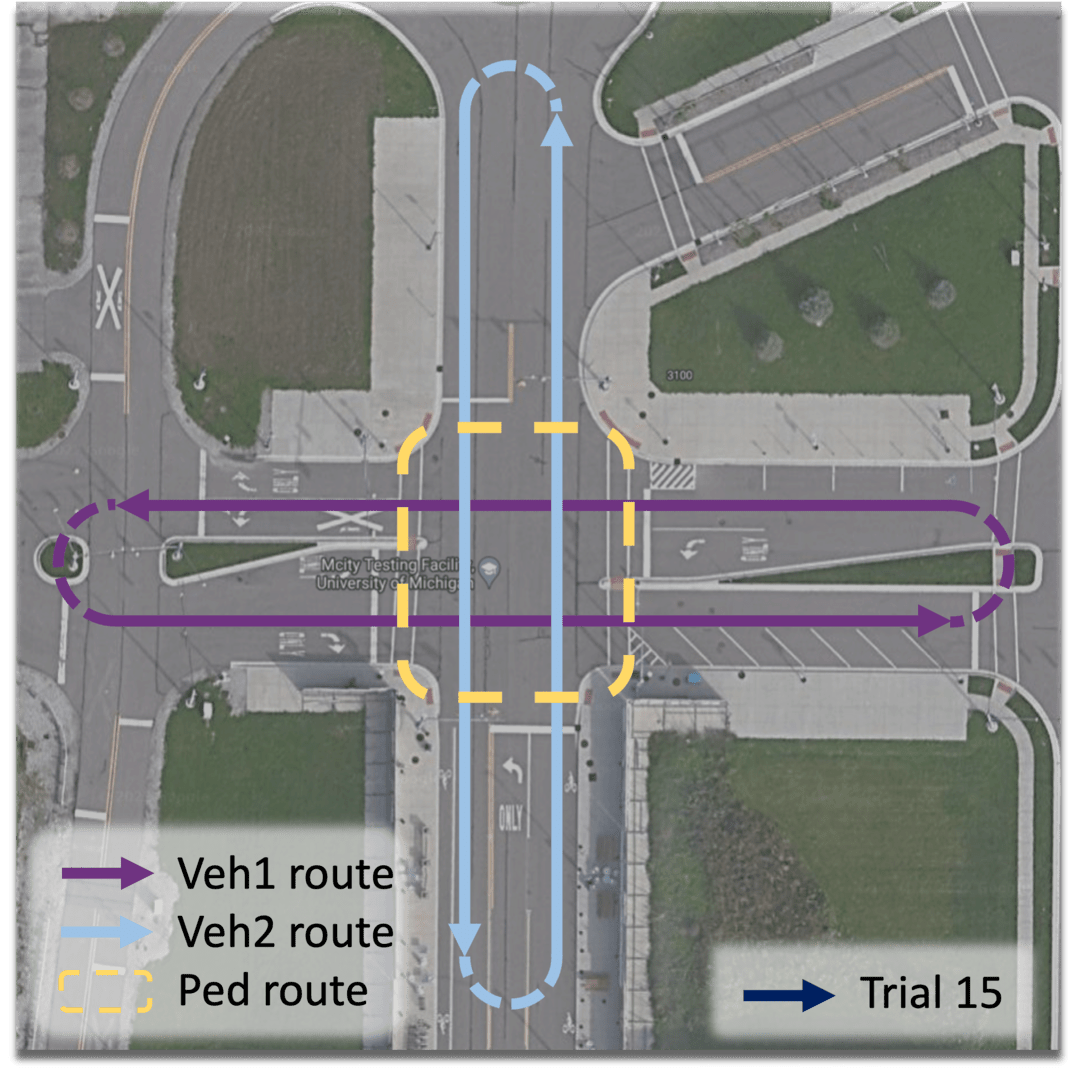}
\caption{Two Veh. with Ped. Trial - 3}
\label{fig:two_vehicle_pedestrian_trial_3}
\end{subfigure}
\caption{Illustrations of all trials designed for evaluation. (a) Latency trials, (b) One Vehicle Trials, (c) One Vehicle with Pedestrian Trials, and (d), (e), (f) Two Vehicles with Pedestrian Trials.}
\label{fig:trial_types}
\end{figure*}

\section{Experiments and Results}

\begin{figure*}[htbp]
    \centering
    \quad\quad Day - Vehicle \quad\quad\quad\quad\quad\quad Night - Vehicle \quad\quad\quad\quad\quad\quad Day - Pedestrian \quad\quad\quad\quad\quad Day - Two Vehicle
    \quad\quad \\
    \rotatebox{90}{\hspace{17mm} System A}
    \begin{subfigure}{.24\textwidth}
        \centering
        \includegraphics[height=\linewidth,width=\linewidth]{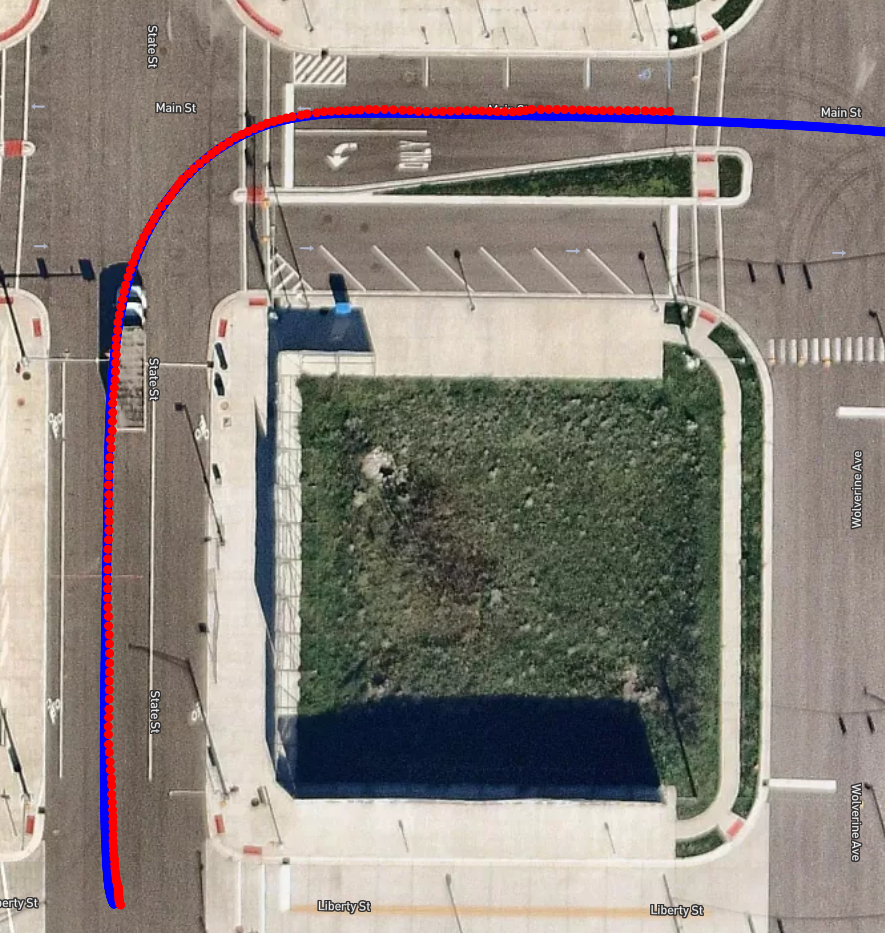}
        \label{fig:system_a_vehicle_day}
    \end{subfigure}\hfill
    \begin{subfigure}{.24\textwidth}
        \centering
        \includegraphics[height=\linewidth,width=\linewidth]{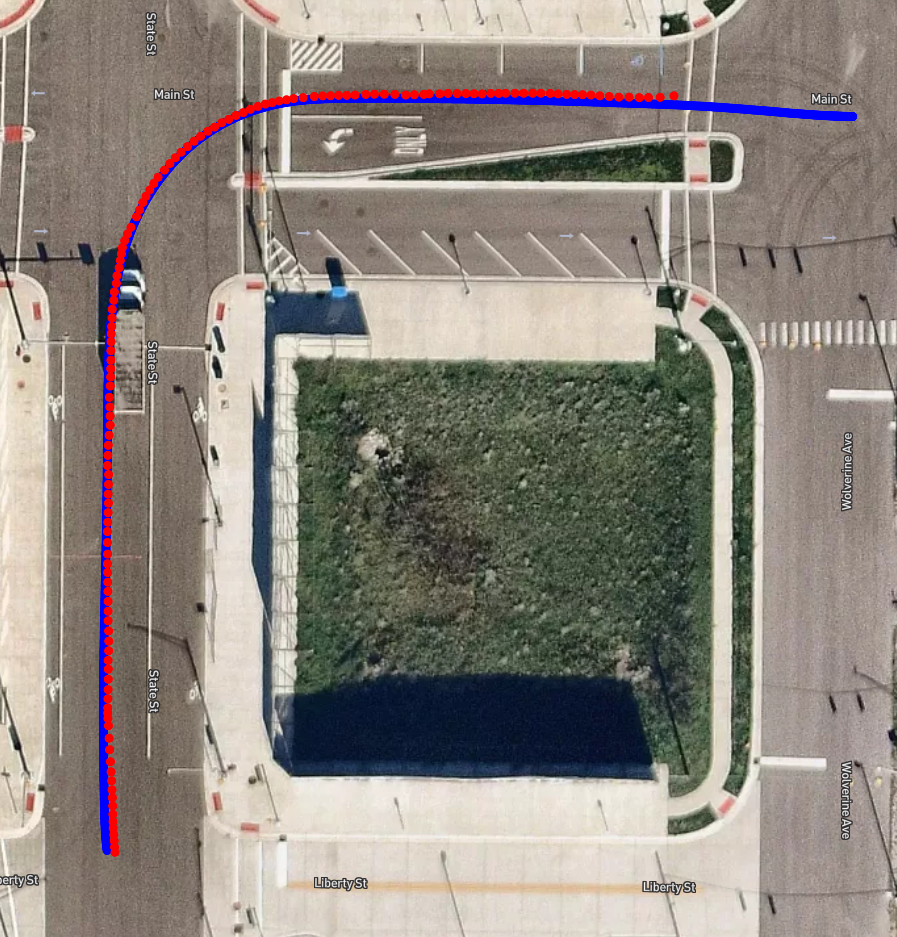}
        \label{fig:system_a_vehicle_night}
    \end{subfigure}\hfill
    \begin{subfigure}{.24\textwidth}
        \centering
        \includegraphics[height=\linewidth,width=\linewidth]{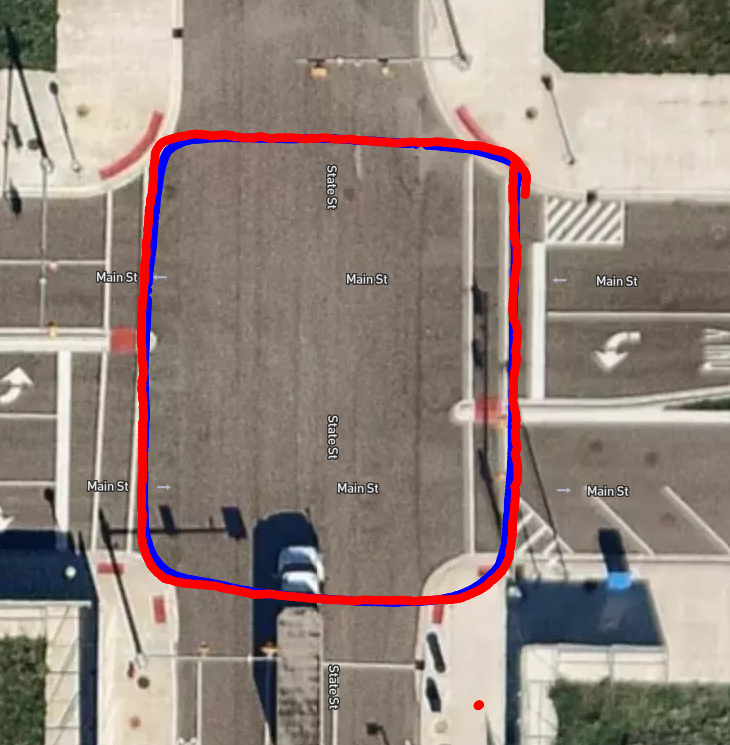}
        \label{fig:system_a_pedestrian}
    \end{subfigure}\hfill
    \begin{subfigure}{.24\textwidth}
        \centering
        \includegraphics[height=\linewidth,width=\linewidth]{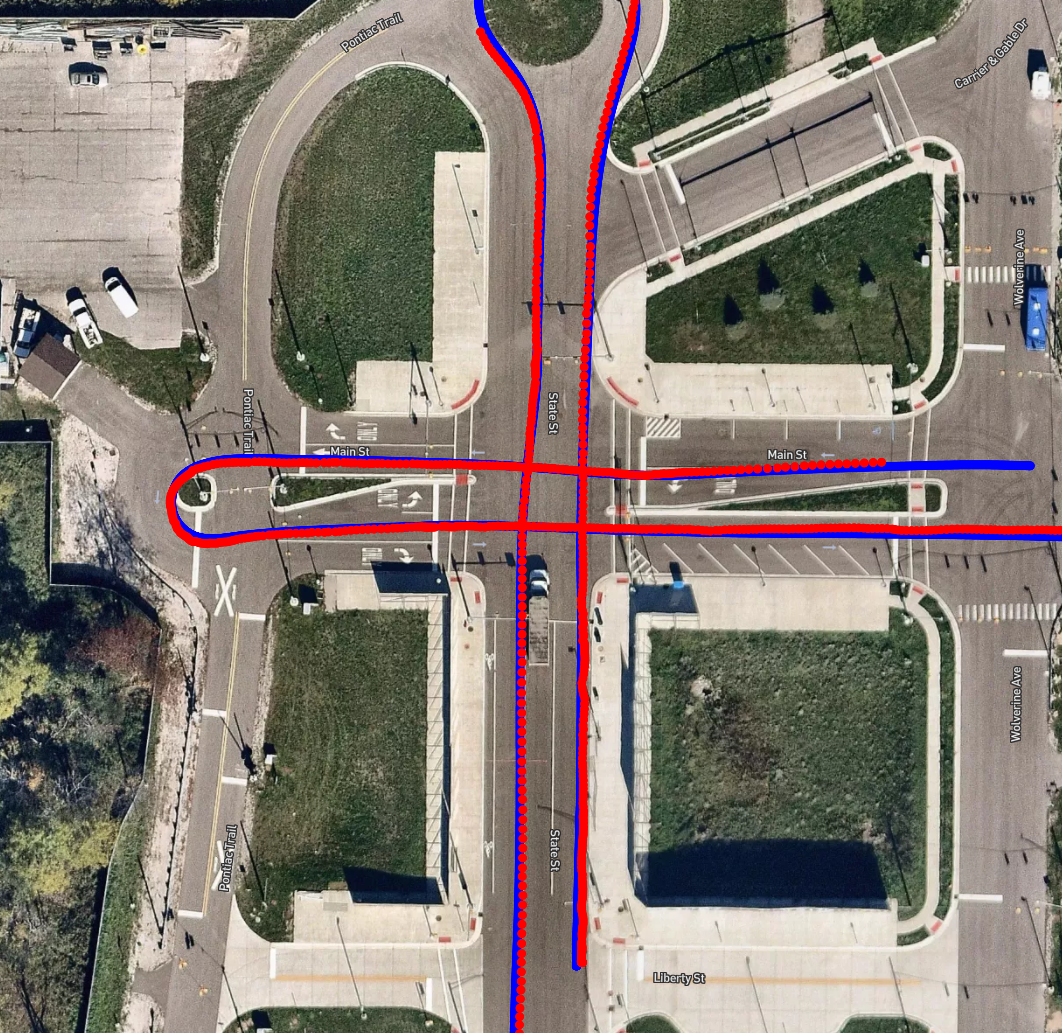}
        \label{fig:system_a_two_vehicles}
    \end{subfigure}%
    \\
    \vspace{-3.2mm}
    \rotatebox{90}{\hspace{17mm} System B}
    \begin{subfigure}{.24\textwidth}
        \centering
        \includegraphics[height=\linewidth,width=\linewidth]{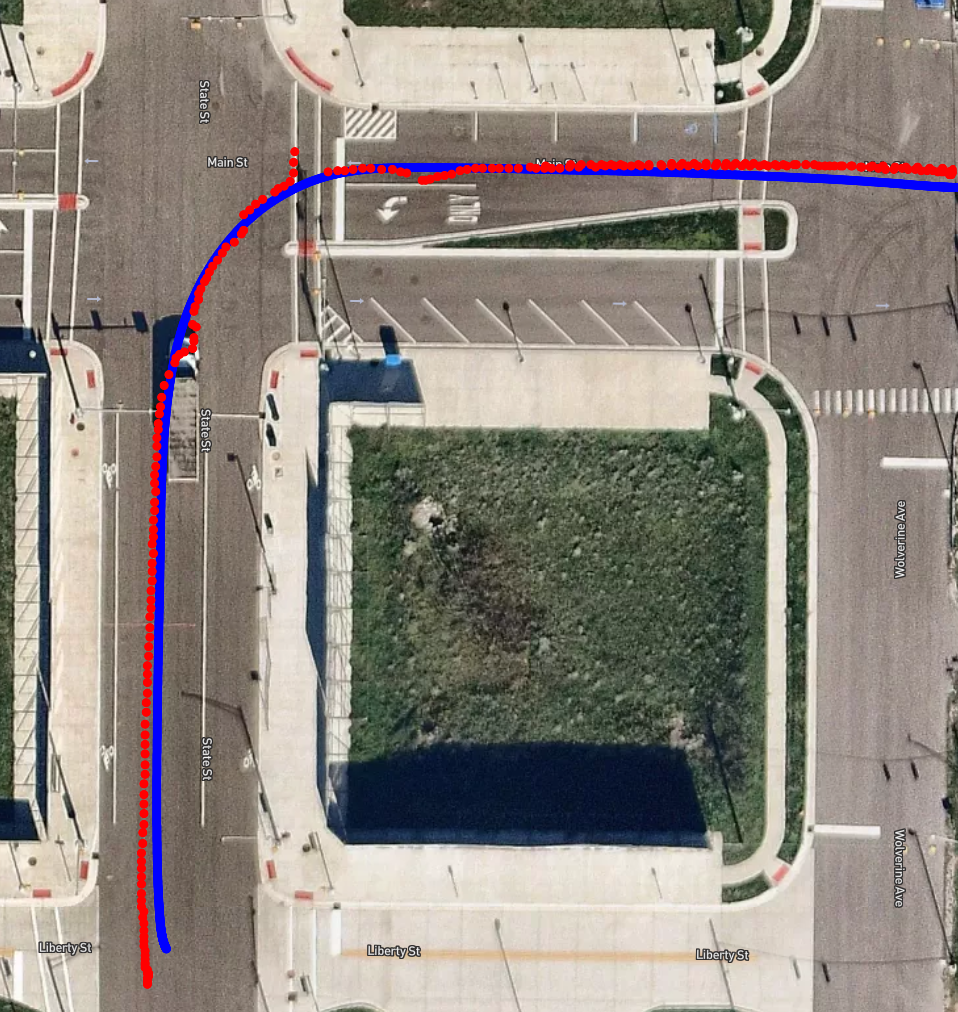}
        \label{fig:system_b_vehicle_day}
    \end{subfigure}\hfill
    \begin{subfigure}{.24\textwidth}
        \centering
        \includegraphics[height=\linewidth,width=\linewidth]{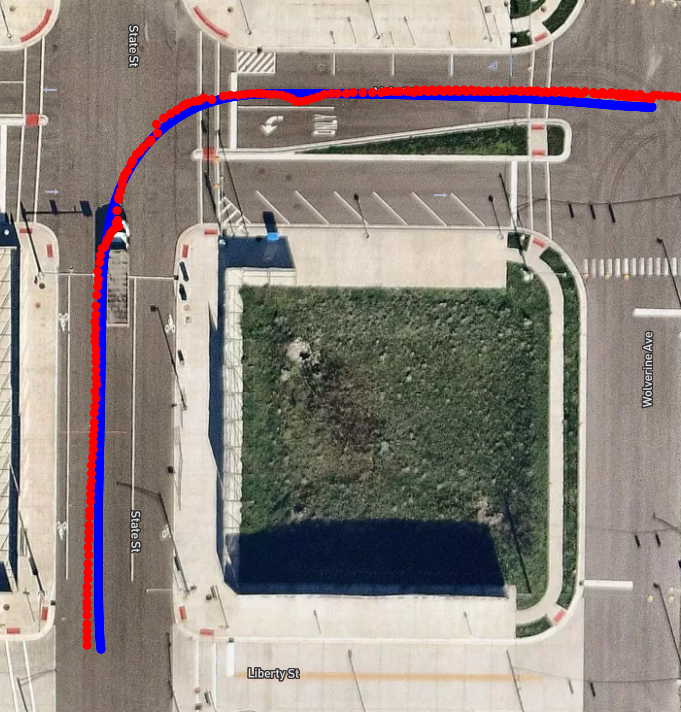}
        \label{fig:system_b_vehicle_night}
    \end{subfigure}\hfill
    \begin{subfigure}{.24\textwidth}
        \centering
        \includegraphics[height=\linewidth,width=\linewidth]{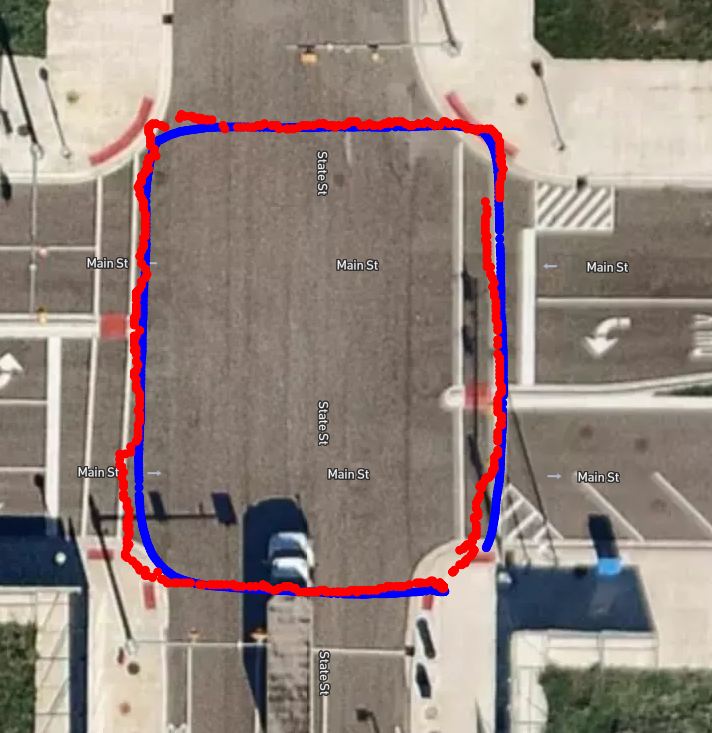}
        \label{fig:system_b_pedestrian}
    \end{subfigure}\hfill
    \begin{subfigure}{.24\textwidth}
        \centering
        \includegraphics[height=\linewidth,width=\linewidth]{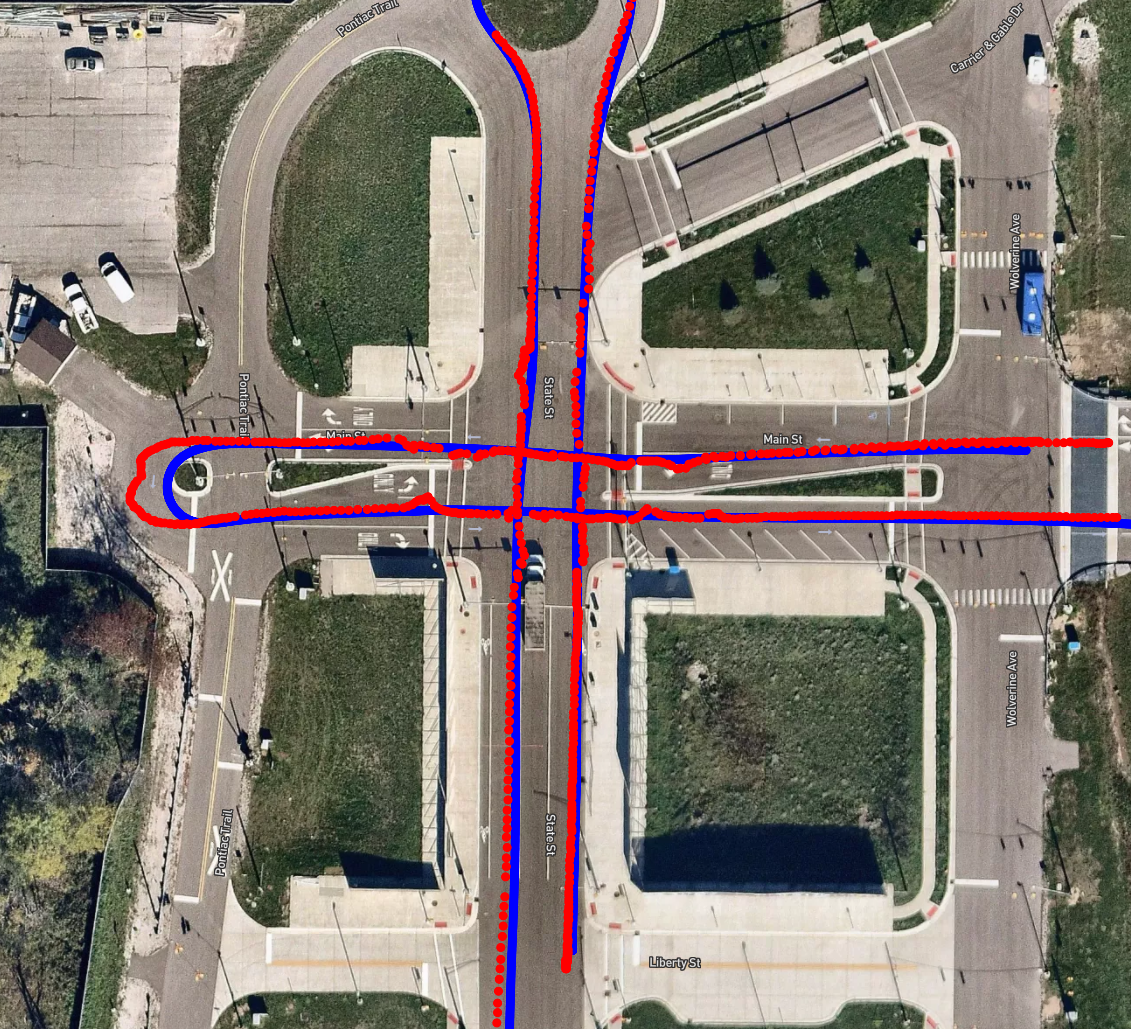}
        \label{fig:system:b_two_vehicles}
    \end{subfigure}
    \\
    \vspace{-3.2mm}
    \rotatebox{90}{\hspace{17mm} System C}
    \begin{subfigure}{.24\textwidth}
        \centering
        \includegraphics[height=\linewidth,width=\linewidth]{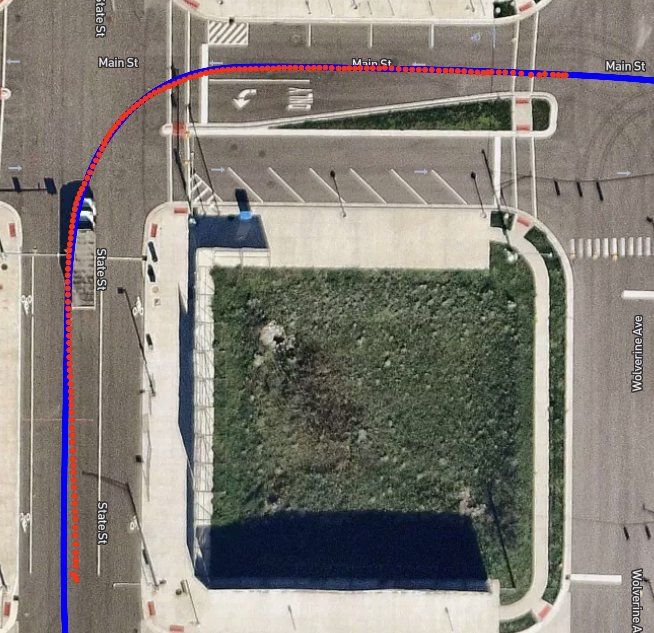}
        \label{fig:system_c_vehicle_day}
    \end{subfigure}\hfill
    \begin{subfigure}{.24\textwidth}
        \centering
        \includegraphics[height=\linewidth,width=\linewidth]{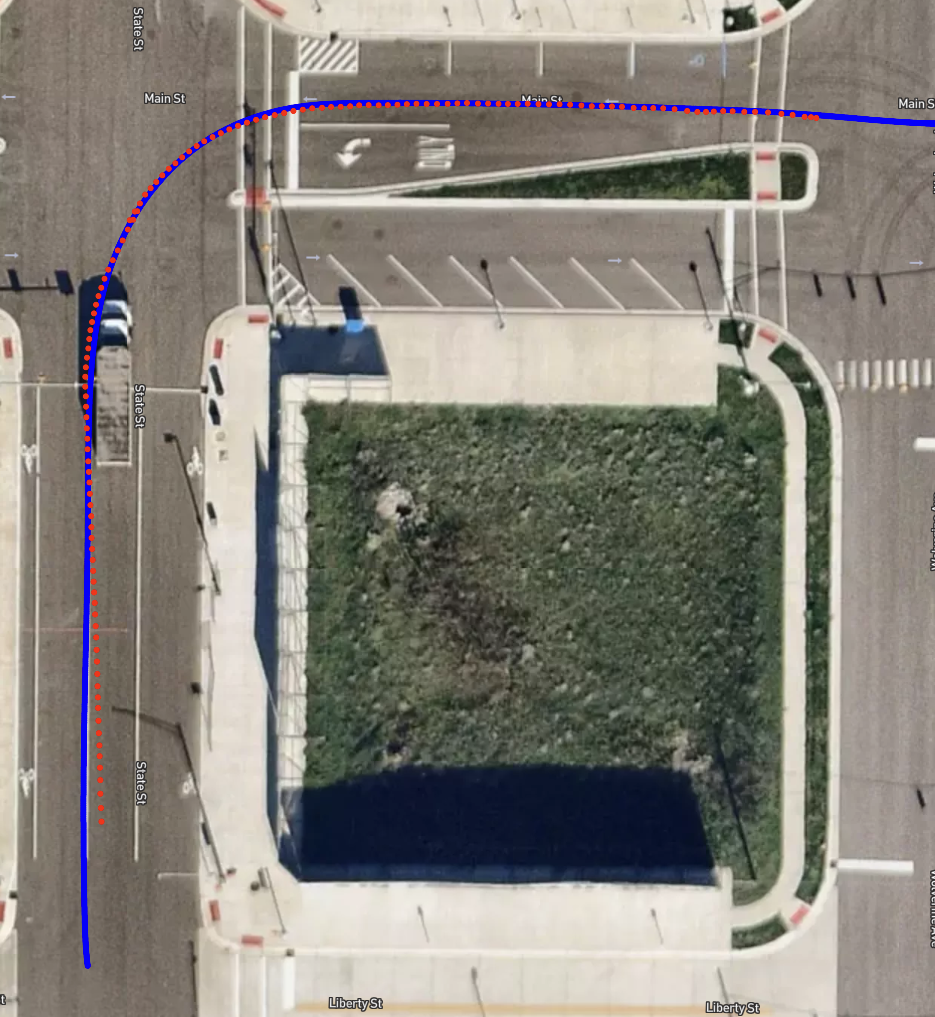}
        \label{fig:system_c_vehicle_night}
    \end{subfigure}\hfill
    \begin{subfigure}{.24\textwidth}
        \centering
        \includegraphics[height=\linewidth,width=\linewidth]{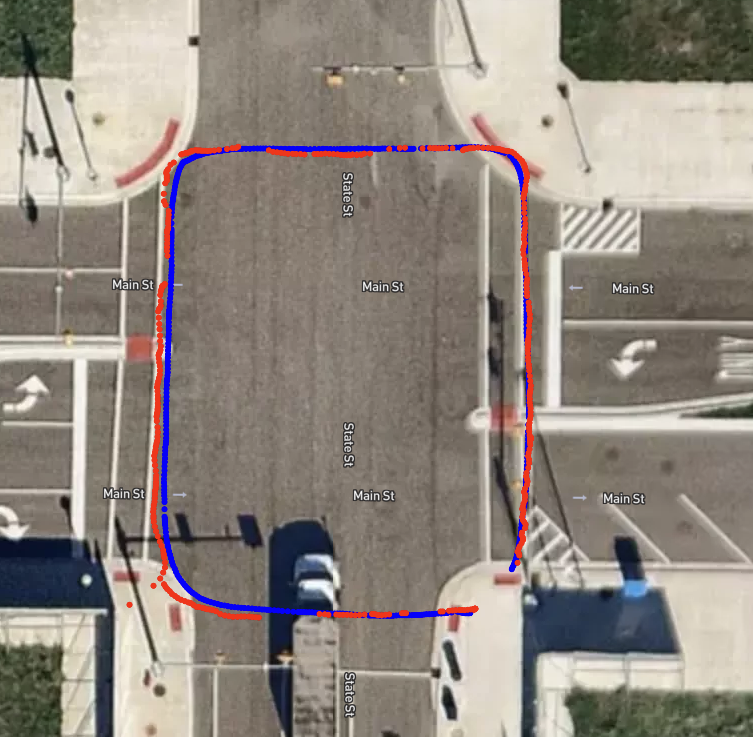}
        \label{fig:system_c_pedestrian}
    \end{subfigure}\hfill
    \begin{subfigure}{.24\textwidth}
        \centering
        \includegraphics[height=\linewidth,width=\linewidth]{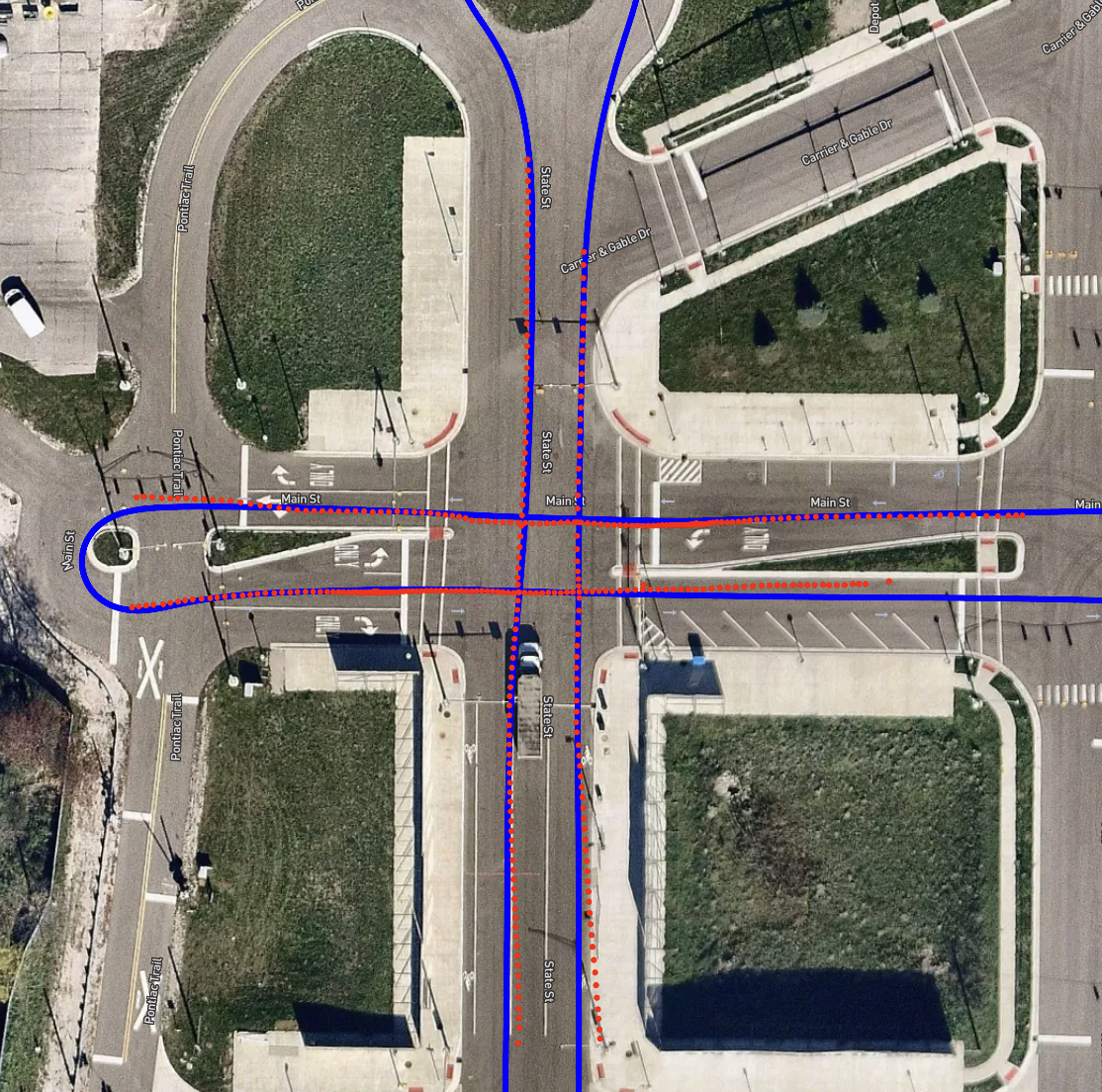}
        \label{fig:system:c_two_vehicles}
    \end{subfigure}
    \caption{The detection results of the  System A (first row), the System B (second row), and the System C (third row) in different trials. The red dots are detection results and the blue dots are ground truth.}
    \label{fig:detection_results}
\end{figure*}

\subsection{Experiments Setup}

The experiments were conducted on October 13th, 2023, in Mcity. Two autonomous vehicles and one pedestrian, each equipped with RTK GPS, were utilized for the trials to provide high-precision ground truth data. The same experiments were performed to evaluate the Three different roadside perception systems. 
To respect the confidentiality agreements in place, we will not be revealing the names of these roadside perception systems under evaluation in our study. Instead, we will refer to them as \textbf{System A}, \textbf{Syestem B} and \textbf{System C} throughout the rest of the paper. Please note that this does not in any way affect the validity or integrity of our evaluation process and results. The \textbf{System A} employs Lidar technology, while \textbf{System B} and \textbf{System C} uses fisheye image sensors. These three systems represent the most popular choices in current roadside perception sensor technology. We also offer the source code used in this study for evaluating these trial data for further research facilitation. The code is accessible on GitHub at \cite{perception_evaluation_code}.

We conducted the trials at two different times: during the daytime at 13 PM and in evening at 6 PM. This strategic choice allowed us to evaluate the performance of the perception systems under varying light conditions. We executed the experimental trials, as outlined in the previous section, for each system. The detection results and the ground truth data were diligently recorded for subsequent analysis. The details and results of the experiments are presented in the following section.

\subsection{Results}
\begin{table}[ht]
\centering
    \setlength{\tabcolsep}{8pt}
\caption{Latency Measurements of Perception Systems}
\begin{tabular}{l|cccc}
\shline
System & North-South & East-West & Mean & Std. \\
\shline
System A & $41$ ms & $54$ ms & $48$ ms & $0.025$\\
System B & $137$ ms & $153$ ms & $145$ ms & $0.106$\\
System C & $1740$ ms & $1690$ ms & $1715$ ms & $0.061$\\
\shline
\end{tabular}
\label{tab:latency}
\end{table}

\begin{table*}[ht]
    \renewcommand{\arraystretch}{1.2}
    \caption{Experimental Results for \textbf{System A} in \textbf{Daytime}}
    \small
    \setlength{\tabcolsep}{4pt}
    \centering
\begin{tabular}{c|c|c|c|c|c|c|c|c}
    \shline
    Category & Trial & FP Rate $(\%)\downarrow$ & FN Rate $(\%)\downarrow$ & IDS $\downarrow$ & MOTA $(\%)\uparrow$ & MOTP (m)$\downarrow$ & IDF1 $(\%)\uparrow$ & HOTA $(\%)\uparrow$ \\
    \shline
    \multirow{12}{*}{Vehicle} & Trial 3 & 0.0 & 7.2 & 0 & 92.0 & 0.402 & 96.3 & 96.3 \\
    & Trial 4 & 0.0 & 7.6 & 0 & 91.8 & 0.478 & 96.1 & 96.1 \\
    & Trial 5 & 0.0 & 0.5 & 0 & 99.3 & 0.386 & 99.7 & 99.7 \\
    & Trial 6 & 0.0 & 0.7 & 0 & 99.3 & 0.373 & 99.7 & 99.7 \\
    & Trial 7 & 0.0 & 0.5 & 0 & 99.4 & 0.507 & 99.8 & 99.8 \\
    & Trial 8 & 0.0 & 0.6 & 0 & 99.4 & 0.424 & 99.7 & 99.7 \\
    & Trial 9 & 0.0 & 0.4 & 0 & 99.3 & 0.415 & 99.8 & 99.8 \\
    & Trial 10 & 0.0 & 1.9 & 0 & 97.8 & 0.376 & 99.0 & 99.0 \\
    & Trial 11 & 0.0 & 2.2 & 0 & 97.9 & 0.408 & 98.9 & 98.9 \\
    & Trial 12 & 0.0 & 1.0 & 0 & 98.8 & 0.555 & 99.5 & 99.5 \\
    & Trial 13 & 0.0 & 2.7 & 0 & 97.3 & 0.470 & 98.6 & 97.3 \\
    & Trial 14 & 0.0 & 0.7 & 0 & 99.2 & 0.568 & 99.6 & 96.6 \\
    & Trial 15 & 0.0 & 1.5 & 0 & 98.5 & 0.523 & 99.2 & 96.7 \\
    \hline
    \multirow{4}{*}{Pedestrian} & Trial 11 & 6.5 & 0.0 & 0 & 93.5 & 0.383 & 96.8 & 93.6 \\
    & Trial 12 & 7.8 & 0.0 & 0 & 92.2 & 0.370 & 96.0 & 92.2 \\
    & Trial 13 & 4.8 & 0.0 & 0 & 95.2 & 0.378 & 97.8 & 95.5 \\
    & Trial 14 & 11.1 & 0.0 & 0 & 88.9 & 0.336 & 94.5 & 89.3 \\
    & Trial 15 & 8.6 & 0.0 & 0 & 91.4 & 0.368 & 95.6 & 91.4 \\
    \shline
\end{tabular}
    \label{tab:exp_results_system_a_day}
\end{table*}

\begin{table*}[ht]
    \renewcommand{\arraystretch}{1.2}
    \caption{Experimental Results for \textbf{System A} in \textbf{Night}}
    \small
    \setlength{\tabcolsep}{4pt}
    \centering
    \begin{tabular}{c|c|c|c|c|c|c|c|c}
    \shline
    Category & Trial & FP Rate $(\%)\downarrow$ & FN Rate $(\%)\downarrow$ & IDS $\downarrow$ & MOTA $(\%)\uparrow$ & MOTP (m)$\downarrow$ & IDF1 $(\%)\uparrow$ & HOTA $(\%)\uparrow$ \\
    \shline
    \multirow{12}{*}{Vehicle} & Trial 3 & 0.0 & 7.2 & 0 & 92.1 & 0.342 & 96.3 & 96.3 \\
    & Trial 4 & 0.0 & 9.5 & 0 & 89.5 & 0.335 & 95.0 & 95.1 \\
    & Trial 5 & 0.0 & 0.7 & 0 & 99.3 & 0.413 & 99.6 & 99.6 \\
    & Trial 6 & 0.0 & 0.5 & 0 & 99.1 & 0.367 & 99.7 & 99.7 \\
    & Trial 7 & 0.0 & 0.7 & 0 & 99.3 & 0.343 & 99.6 & 99.6 \\
    & Trial 8 & 0.0 & 0.8 & 0 & 99.2 & 0.247 & 99.6 & 99.6 \\
    & Trial 9 & 0.0 & 0.2 & 0 & 100.0 & 0.409 & 99.9 & 99.9 \\
    & Trial 10 & 0.0 & 0.0 & 0 & 100.0 & 0.339 & 100.0 & 100.0 \\
    & Trial 11 & 0.0 & 2.3 & 0 & 97.7 & 0.413 & 98.8 & 98.8 \\
    & Trial 12 & 0.0 & 0.2 & 0 & 99.7 & 0.354 & 99.9 & 99.9 \\
    & Trial 13 & 0.0 & 2.1 & 0 & 97.9 & 0.390 & 98.9 & 97.8 \\
    & Trial 14 & 0.0 & 0.4 & 0 & 99.5 & 0.390 & 99.8 & 97.4 \\
    & Trial 15 & 0.0 & 1.6 & 0 & 98.3 & 0.400 & 99.2 & 97.2 \\
    \hline
    \multirow{3}{*}{Pedestrian} & Trial 11 & 1.9 & 0.0 & 0 & 98.1 & 0.382 & 99.0 & 98.1 \\
    & Trial 12 & 2.9 & 0.0 & 0 & 97.1 & 0.345 & 98.7 & 97.2 \\
    & Trial 13 & 5.6 & 0.0 & 0 & 94.4 & 0.341 & 97.2 & 94.5 \\
    & Trial 14 & 2.1 & 0.0 & 0 & 97.9 & 0.324 & 99.0 & 98.0 \\
    \shline
\end{tabular}
    \label{tab:exp_results_system_a_night}
\end{table*}

\begin{table*}[ht]
    \renewcommand{\arraystretch}{1.2}
    \caption{Experimental Results for \textbf{System B} in \textbf{Daytime}}
    \small
    \setlength{\tabcolsep}{4pt}
    \centering
    \begin{tabular}{c|c|c|c|c|c|c|c|c}
    \shline
    Category & Trial & FP Rate $(\%)\downarrow$ & FN Rate $(\%)\downarrow$ & IDS $\downarrow$ & MOTA $(\%)\uparrow$ & MOTP (m)$\downarrow$ & IDF1 $(\%)\uparrow$ & HOTA $(\%)\uparrow$ \\
    \shline
    \multirow{12}{*}{Vehicle} & Trial 3 & 37.2 & 33.6 & 1 & 30.1 & 0.758 & 47.4 & 39.7 \\
    & Trial 4 & 13.7 & 22.1 & 0 & 61.8 & 0.847 & 81.9 & 72.5 \\
    & Trial 5 & 33.1 & 39.8 & 0 & 22.6 & 0.919 & 63.4 & 48.3 \\
    & Trial 6 & 44.7 & 54.1 & 0 & -9.8 & 0.885 & 50.2 & 35.8 \\
    & Trial 7 & 69.1 & 75.4 & 0 & -64.0 & 0.897 & 27.4 & 17.0 \\
    & Trial 8 & 35.1 & 37.9 & 0 & 25.3 & 0.754 & 63.5 & 47.3 \\
    & Trial 9 & 15.5 & 25.1 & 0 & 55.8 & 0.723 & 79.4 & 69.4 \\
    & Trial 10 & 29.3 & 43.2 & 0 & 16.7 & 0.881 & 63.0 & 50.1 \\
    & Trial 11 & 26.3 & 30.9 & 1 & 40.5 & 0.781 & 71.3 & 58.3 \\
    & Trial 12 & 34.3 & 39.4 & 0 & 22.9 & 0.770 & 63.1 & 47.5 \\
    & Trial 13 & 35.4 & 46.5 & 0 & 8.5 & 0.786 & 58.5 & 42.2 \\
    & Trial 14 & 39.1 & 45.1 & 0 & 10.8 & 0.745 & 57.7 & 41.0 \\
    & Trial 15 & 39.4 & 44.6 & 0 & 11.9 & 0.825 & 57.9 & 40.9 \\
    \hline
    \multirow{4}{*}{Pedestrian} & Trial 11 & 5.9 & 5.5 & 0 & 88.6 & 0.640 & 94.3 & 91.4 \\
    & Trial 12 & 5.9 & 4.1 & 0 & 90.2 & 0.519 & 95.0 & 92.3 \\
    & Trial 13 & 6.9 & 4.5 & 0 & 88.7 & 0.670 & 94.3 & 89.9 \\
    & Trial 14 & 7.8 & 6.3 & 0 & 86.0 & 0.650 & 92.9 & 89.0 \\
    & Trial 15 & 6.3 & 6.7 & 0 & 87.0 & 0.666 & 93.5 & 90.0 \\
    \shline
\end{tabular}

    \label{tab:exp_results_system_b_day}
\end{table*}

\begin{table*}[ht]
    \renewcommand{\arraystretch}{1.2}
    \caption{Experimental Results for \textbf{System B} in \textbf{Night}}
    \small
    \setlength{\tabcolsep}{4pt}
    \centering
    \begin{tabular}{c|c|c|c|c|c|c|c|c}
    \shline
    Category & Trial & FP Rate $(\%)\downarrow$ & FN Rate $(\%)\downarrow$ & IDS $\downarrow$ & MOTA $(\%)\uparrow$ & MOTP (m)$\downarrow$ & IDF1 $(\%)\uparrow$ & HOTA $(\%)\uparrow$ \\
    \shline
    \multirow{12}{*}{Vehicle} & Trial 3 & 41.7 & 35.2 & 1 & 25.8 & 0.817 & 43.9 & 35.7 \\
    & Trial 4 & 34.8 & 24.1 & 1 & 43.7 & 0.846 & 46.2 & 41.7 \\
    & Trial 5 & 41.7 & 45.4 & 0 & 9.8 & 0.794 & 56.4 & 40.3 \\
    & Trial 6 & 28.7 & 38.9 & 0 & 25.5 & 0.883 & 65.8 & 52.1 \\
    & Trial 7 & 65.2 & 65.2 & 0 & -30.4 & 1.090 & 34.8 & 21.1 \\
    & Trial 8 & 31.1 & 32.2 & 0 & 36.1 & 0.790 & 68.3 & 52.4 \\
    & Trial 9 & 22.1 & 32.8 & 0 & 40.0 & 0.798 & 72.1 & 60.0 \\
    & Trial 10 & 10.2 & 20.8 & 0 & 66.1 & 0.793 & 84.2 & 77.0 \\
    & Trial 11 & 31.8 & 43.3 & 0 & 16.2 & 0.808 & 61.9 & 48.2 \\
    & Trial 12 & 31.3 & 37.5 & 0 & 27.3 & 0.786 & 65.4 & 50.4 \\
    & Trial 13 & 22.3 & 31.4 & 0 & 42.1 & 0.803 & 72.9 & 58.6 \\
    & Trial 14 & 33.7 & 39.0 & 0 & 23.9 & 0.770 & 63.5 & 46.9 \\
    & Trial 15 & 27.7 & 34.5 & 0 & 34.2 & 0.839 & 68.7 & 53.1 \\
    \hline
    \multirow{3}{*}{Pedestrian} & Trial 11 & 47.2 & 0.0 & 0 & 52.8 & 0.591 & 81.8 & 60.2 \\
    & Trial 12 & 45.9 & 0.0 & 0 & 54.1 & 0.603 & 84.2 & 62.7 \\
    & Trial 13 & 47.5 & 0.0 & 0 & 52.5 & 0.588 & 82.1 & 60.1 \\
    & Trial 14 & 45.7 & 0.0 & 0 & 54.3 & 0.551 & 84.5 & 62.8 \\
    \shline
\end{tabular}
    \label{tab:exp_results_system_b_night}
\end{table*}

\begin{table*}[ht]
    \renewcommand{\arraystretch}{1.2}
    \caption{Experimental Results for \textbf{System C} in \textbf{Daytime}}
    \small
    \setlength{\tabcolsep}{4pt}
    \centering
    \begin{tabular}{c|c|c|c|c|c|c|c|c}
    \shline
    Category & Trial & FP Rate $(\%)\downarrow$ & FN Rate $(\%)\downarrow$ & IDS $\downarrow$ & MOTA $(\%)\uparrow$ & MOTP (m)$\downarrow$ & IDF1 $(\%)\uparrow$ & HOTA $(\%)\uparrow$ \\
    \shline
    \multirow{12}{*}{Vehicle} & Trial 3 & 1.4 & 4.5 & 1 & 93.0 & 0.458 & 55.1 & 60.8 \\
    & Trial 4 & 6.6 & 22.1 & 0 & 66.9 & 0.496 & 85.0 & 80.4 \\
    & Trial 5 & 19.4 & 46.5 & 1 & 9.2 & 0.705 & 34.2 & 37.5 \\
    & Trial 6 & 33.7 & 56.1 & 0 & -18.4 & 0.514 & 52.8 & 42.2 \\
    & Trial 7 & 19.4 & 41.5 & 0 & 23.4 & 0.645 & 67.8 & 58.9 \\
    & Trial 8 & 24.5 & 48.4 & 1 & 3.6 & 0.596 & 36.2 & 36.7 \\
    & Trial 9 & 10.9 & 27.2 & 0 & 55.5 & 0.875 & 80.1 & 73.3 \\
    & Trial 10 & 16.1 & 35.7 & 1 & 36.4 & 0.584 & 38.2 & 41.5 \\
    & Trial 11 & 22.4 & 38.6 & 0 & 28.9 & 0.583 & 68.5 & 57.6 \\
    & Trial 12 & 12.6 & 26.4 & 0 & 55.8 & 0.678 & 79.9 & 71.9 \\
    & Trial 13 & 21.4 & 45.9 & 1 & 11.6 & 0.627 & 57.4 & 48.9 \\
    & Trial 14 & 18.3 & 32.9 & 2 & 41.0 & 0.786 & 57.4 & 50.3 \\
    & Trial 15 & 21.6 & 38.0 & 2 & 29.9 & 0.657 & 61.6 & 47.6 \\
    \hline
    \multirow{6}{*}{Pedestrian} & Trial 11 & 6.8 & 16.9 & 4 & 73.3 & 0.422 & 40.7 & 48.8 \\
    & Trial 12 & 7.2 & 17.1 & 4 & 72.8 & 0.429 & 60.4 & 63.3 \\
    & Trial 13 & 7.3 & 28.3 & 7 & 54.5 & 0.386 & 26.8 & 37.9 \\
    & Trial 14 & 6.9 & 21.3 & 2 & 67.3 & 0.435 & 41.7 & 49.5 \\
    & Trial 15 & 7.1 & 18.1 & 3 & 71.6 & 0.466 & 51.9 & 57.0 \\
    \shline
\end{tabular}

    \label{tab:exp_results_system_c_day}
\end{table*}

\begin{table*}[ht]
    \renewcommand{\arraystretch}{1.2}
    \caption{Experimental Results for \textbf{System C} in \textbf{Night}}
    \small
    \setlength{\tabcolsep}{4pt}
    \centering
    \begin{tabular}{c|c|c|c|c|c|c|c|c}
    \shline
    Category & Trial & FP Rate $(\%)\downarrow$ & FN Rate $(\%)\downarrow$ & IDS $\downarrow$ & MOTA $(\%)\uparrow$ & MOTP (m)$\downarrow$ & IDF1 $(\%)\uparrow$ & HOTA $(\%)\uparrow$ \\
    \shline
    \multirow{11}{*}{Vehicle} & Trial 3 & 2.3 & 5.0 & 1 & 91.7 & 0.519 & 55.3 & 60.6 \\
    & Trial 4 & 0.0 & 18.1 & 0 & 77.9 & 0.506 & 90.0 & 90.5 \\
    & Trial 5 & 18.3 & 46.1 & 1 & 10.8 & 0.617 & 38.5 & 40.6 \\
    & Trial 6 & 20.6 & 50.7 & 0 & -2.9 & 0.560 & 60.8 & 53.6 \\
    & Trial 7 & 19.8 & 37.0 & 0 & 33.0 & 0.460 & 70.5 & 60.4 \\
    & Trial 8 & 26.3 & 30.6 & 1 & 40.4 & 0.523 & 39.1 & 39.2 \\
    & Trial 9 & 7.8 & 24.7 & 0 & 62.2 & 0.767 & 82.9 & 77.8 \\
    & Trial 10 & 8.6 & 20.8 & 1 & 66.4 & 0.568 & 48.8 & 52.3 \\
    & Trial 11 & 27.3 & 42.7 & 0 & 18.6 & 0.596 & 64.1 & 53.5 \\
    & Trial 12 & 13.5 & 26.9 & 2 & 53.7 & 0.559 & 60.8 & 57.8 \\
    & Trial 13 & 21.3 & 37.9 & 1 & 30.6 & 0.566 & 66.2 & 54.3 \\
    & Trial 14 & 13.2 & 30.8 & 1 & 47.2 & 0.655 & 47.2 & 46.5 \\
    & Trial 15 & 12.8 & 24.8 & 1 & 53.4 & 0.522 & 78.8 & 58.0 \\
    \hline
    \multirow{3}{*}{Pedestrian} & Trial 11 & 58.8 & 29.3 & 8 & 23.5 & 0.380 & 19.2 & 19.6 \\
    & Trial 12 & 30.9 & 29.3 & 5 & 39.7 & 0.400 & 21.4 & 27.8 \\
    & Trial 13 & 0.0 & 54.1 & 9 & -20.7 & 0.376 & 20.4 & 33.7 \\
    \shline
\end{tabular}
    \label{tab:exp_results_system_c_night}
\end{table*}

\begin{table*}[ht]
    \renewcommand{\arraystretch}{1.2}
\centering
\caption{The Mean Results of System A, System B, and System C over all trials}
\label{tab:overall_results}
\small
\setlength{\tabcolsep}{4pt}
\centering
\begin{tabular}{c|c|c|c|c|c|c|c}
    \shline
    & FP Rate $(\%)\downarrow$ & FN Rate $(\%)\downarrow$ & IDS $\downarrow$ & MOTA $(\%)\uparrow$ & MOTP (m)$\downarrow$ & IDF1 $(\%)\uparrow$ & HOTA $(\%)\uparrow$ \\
    \shline
\multicolumn{8}{c}{\textbf{Vehicle detection in daytime}} \\
\hline
System A & $0.0$ & $2.1$  & $0.0$  & $92.0$ & $0.402$ & $96.3$ & $96.3$ \tabularnewline
\hline
System B & $34.8$ & $41.4$ & $0.2$ & $17.9$ & $0.813$ & $60.4$ & $46.9$ \tabularnewline
\hline
System C & $16.9$ & $33.8$ & $0.7$ & $37.2$ & $0.641$ & $62.4$ & $56.8$ \tabularnewline
\hline
\multicolumn{8}{c}{\textbf{Vehicle detection in night}} \\
\hline
System A & $0.0$ & $2.0$ & $0.0$ & $97.8$ & $0.365$ & $99.0$ & $98.5$ \tabularnewline
\hline
System B & $32.5$ & $37.0$ & $0.6$ & $27.7$ & $0.832$ & $61.9$ & $49.0$ \tabularnewline
\hline
System C & $14.8$ & $30.8$ & $1.0$ & $44.7$ & $0.570$ & $61.7$ & $58.0$ \tabularnewline
\shline
\multicolumn{8}{c}{\textbf{Pedestrian detection in daytime}} \\
\hline
System A & $7.8$ & $0.0$ & $0.0$ & $92.2$ & $0.367$ & $96.1$ & $92.4$ \tabularnewline
\hline
System B & $6.6$ & $5.4$ & $0.0$ & $88.1$ & $0.629$ & $94.0$ & $90.5$ \tabularnewline
\hline
System C & $8.1$ & $20.8$ & $3.8$ & $66.5$ & $0.440$ & $44.0$ & $50.9$ \tabularnewline
\hline
\multicolumn{8}{c}{\textbf{Pedestrian detection in night}} \\
\hline
System A & $3.1$ & $0.0$ & $0.0$ & $96.9$ & $0.348$ & $98.5$ & $96.9$ \tabularnewline
\hline
System B & $46.6$ & $0.0$ & $0.0$ & $53.4$ & $0.583$ & $83.1$ & $61.4$ \tabularnewline
\hline
System C & $29.9$ & $37.6$ & $7.3$ & $61.5$ & $0.385$ & $20.3$ & $27.0$ \tabularnewline
\shline
\end{tabular}
\end{table*}
\subsubsection{Latency Measurement Results}

We conducted two trials for each perception system to measure the latency. This was achieved with the method described in Section \ref{sss:latency-measurement} by driving vehicles back and forth along predetermined routes with constant speed. The first trial involved driving in the north-south direction, and the second in the east-west direction. Each trial consisted of five rounds of driving. 
The results of these measurements are presented in Table \ref{tab:latency}. System A demonstrates a latency of 48 milliseconds, which is well within the acceptable range for cooperative driving tasks, showcasing its efficiency in real-time responsiveness. System B, with a latency of 145 ms, also fits within a threshold suitable for a broad spectrum of applications. In contrast, System C exhibits a latency of 1715 ms, this level of latency falls short of the requirements for many real-time applications, highlighting a critical area for improvement.

As observed in Table \ref{tab:latency}, the latency measured during the North-South trials is close to the latency observed in the East-West trials for both systems, with a small proportional variance. This consistency provides validation for our measurements, reinforcing our confidence in the reliability of the latency estimates. The latency measurements obtained here are utilized in the subsequent assessments described in the following sections.

\subsubsection{Qualitative Results Review}

The qualitative analysis is predicated on a visual comparison between the detected trajectories and the ground truth data, as illustrated in Figure \ref{fig:detection_results}. In this figure, each row corresponds to results from the same system, while each column represents results from the same trial across different systems. The detection results are indicated by red dots, and the ground truth data is represented by blue dots.

The first column of the figure displays the detection results from Trial 3, and the second column presents the results from the same trial conducted at night. Notably, all systems perform relatively well under varying light conditions. This performance is particularly surprising for System B and C, which are image-based detection systems, implying that its underlying algorithm effectively handles the challenges posed by low light conditions.
The third column of the figure presents the pedestrian detection results. In this scenario, System A and B provide better localization results. 
The fourth column in the figure shows the results from Trial 15, which involves two vehicles approaching the intersection from perpendicular directions. All systems demonstrate capability in tracking multiple vehicles, with results aligning closely with the ground truth data.

In summary, the visualized results indicate that all Systems perform well in general, accurately detecting and localizing road users in various conditions.

\subsubsection{Quantitative Results}

Detailed quantitative results for each trial are outlined in six separate tables. Table \ref{tab:exp_results_system_a_day} presents the results for daytime trials conducted using System A, Table \ref{tab:exp_results_system_b_day} displays same results for System B and \ref{tab:exp_results_system_c_day} for System C. The performance of all systems under night conditions are presented separately in Table \ref{tab:exp_results_system_a_night} for System A, Table \ref{tab:exp_results_system_b_night} for System B and Table \ref{tab:exp_results_system_c_night} for System C.
In each table, each row corresponds to a specific trial. It is important to note that for Trials 11 to 15, the metrics were calculated separately for vehicles and pedestrians, despite the simultaneous detection of both during the trials. 


Insights drawn from Tables \ref{tab:exp_results_system_a_day} and \ref{tab:exp_results_system_a_night} reveal that System A demonstrates notable efficacy, particularly in the domains of vehicle detection, tracking, and localization. This system records impressive scores in MOTA, IDF1, and HOTA across most vehicle trials conducted during both daytime and nighttime conditions. However, its performance in pedestrian detection is marginally less effective, as evidenced by a higher incidence of false positives. This observation aligns with expectations, considering that pedestrians, being smaller and consequently less prominent in the LiDAR point cloud, pose a greater challenge for accurate detection.

Tables \ref{tab:exp_results_system_b_day} and \ref{tab:exp_results_system_b_night} suggest that System B's performance is notably inferior to that of System A. Despite the visually satisfactory outputs depicted in Figure \ref{fig:detection_results}, System B's quantitative results fall short of expectations. This divergence is largely due to the system's latency variations, characterized by a high standard deviation, as detailed in Table \ref{tab:latency}. Such fluctuations play a critical role in contributing to localization errors. The reason for this is that when calculating localization errors, we match the detection point with the ground truth data from the same time frame. However, visually, all detection points are plotted on the map irrespective of their detection time, masking the impact of latency variation. These factors predominantly affect the system's localization accuracy, resulting in comparable performance levels during both daytime and nighttime trials. However, System B excels in pedestrian detection, demonstrating significantly improved results. This enhancement is largely because the slower movement of pedestrians lessens the impact of latency variation on their localization.

Tables \ref{tab:exp_results_system_c_day} and \ref{tab:exp_results_system_c_night} present the performance metrics of System C. These tables reveal that System C outperforms System B in vehicle detection, primarily due to its reduced latency variation. However, there still exists a noticeable performance gap when compared to System A. Regarding pedestrian detection, System C shows less optimal results, with this shortcoming being particularly pronounced during daytime trials.

Table \ref{tab:overall_results} provides a comprehensive summary of the mean results from all trials conducted with Systems A, B, and C, encompassing various conditions and targeting both vehicles and pedestrians. The data consolidated in this table corroborate our earlier discussions, clearly indicating that the overall performance of Systems B and C falls short of System A's efficiency. This disparity is particularly evident in image-based systems, which appear to struggle with latency issues more than lidar-based systems. This is likely due to more intensive processing tasks required by image sensors. Addressing this latency challenge is evidently a critical area for improvement in these systems.  

Furthermore, Table \ref{tab:overall_results} sheds light on the influence of lighting conditions on the three systems. It becomes evident that lighting conditions do not affect System A, a LiDAR-based system, demonstrating its robustness in varied environments. In contrast, Systems B and C exhibit noticeable differences in pedestrian detection performance under varying lighting conditions. Between System B and System C, System C demonstrates superior vehicle detection, and System B shows better performance in pedestrian detection. 

It is also significant to note that the Mean Object Tracking Precision (MOTP) metric for System A is comparatively lower. A lower MOTP value indicates superior localization accuracy, an area where LiDAR-based detection systems, such as System A, are widely recognized to excel. This advantage is primarily due to the inherent capabilities of LiDAR technology in precisely mapping object locations, a feat that often challenges image-based detection systems.

\subsection{Discussion}
From the results discussed above, we observed a notable performance gap between the LiDAR-based and image-based methods. This disparity is primarily attributed to the stringent 1.5-meter distance threshold applied during our evaluation, a standard derived from the SAE2945 \cite{sae2016board}, which stipulates 1.5 meters as the maximum acceptable localization error for autonomous vehicles using perception results. Figure \ref{fig:threshold-analysis} illustrates the false positive rate and false negative rate of all three systems when different thresholds are applied. The figure reveals that the FP and FN rates of the three systems asymptotically approach a lower value. Notably, System A, the LiDAR-based system, reaches this lower value with a much smaller localization threshold. This finding indicates that while image-based detection systems do accurately detect road objects, their higher localization errors result in many correctly detected objects being categorized as false positives when evaluated against the 1.5-meter threshold. Consequently, these results suggest a clear area for improvement in image-based systems: enhancing the accuracy of localization in future iterations.

\begin{figure}[htb]
    \centering
    \begin{subfigure}{.5\linewidth}
        \centering
        \includegraphics[width=\linewidth]{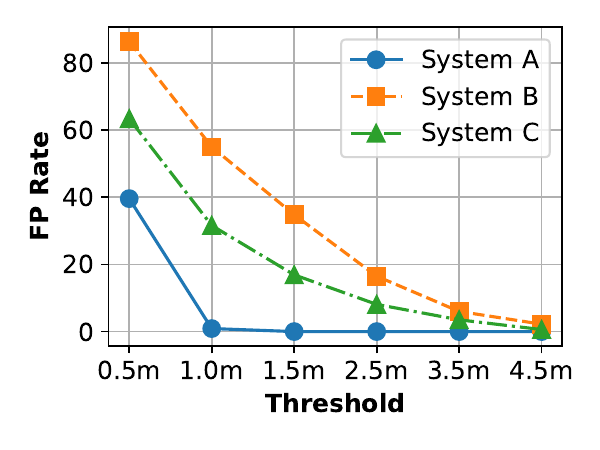} 
        \caption{FP analysis}
        \label{fig:threshold-FP}
    \end{subfigure}%
    \begin{subfigure}{.5\linewidth}
        \centering
        \includegraphics[width=\linewidth]{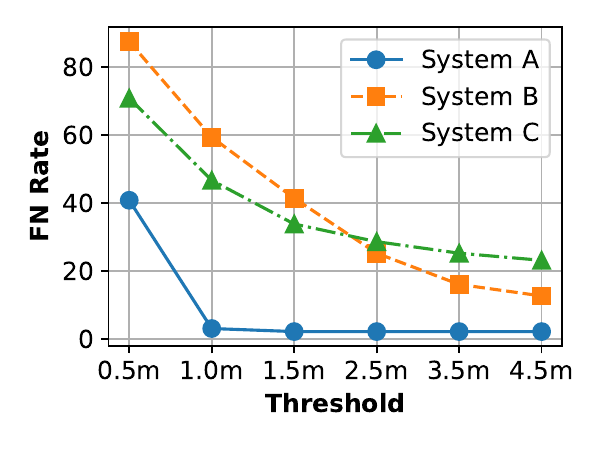} 
        \caption{FN analysis}
        \label{fig:threshold-FN}
    \end{subfigure}
    \caption{Evaluation results under different distance threshold.}
    \label{fig:threshold-analysis}
\end{figure}

\section{Conclusion}

In this paper, we have presented a systematic evaluation methodology for roadside perception systems, encompassing measurement techniques, metrics selection, and experimental trial design. We conducted comprehensive experiments on Three representative off-the-shelf roadside perception systems (referred to as System A and System B) within a controlled testing facility. Our experiments evaluated these systems under varying conditions and across diverse traffic scenarios involving vehicles and pedestrians.

The results showcase the viability of our proposed evaluation methodology in enabling standardized, comparative analysis of different roadside perception systems. We were able to quantitatively analyze numerous performance metrics for both systems and identify their respective strengths and weaknesses. 

Our study and proposed methodology make significant headway towards establishing standardized benchmarks and evaluation practices for roadside perception systems. We believe widespread adoption of systematic evaluation approaches like ours will greatly benefit the research and development of infrastructure-based perception for autonomous driving. Standardized benchmarks will allow for more rigorous testing, direct comparison between products, identification of limitations, and targeted improvements.

\appendix

\subsection{Analysis on Impact of Slight Speed Deviations}
\label{appendix:1}
In the practical implementation of the latency measurement described in Section \ref{sss:latency-measurement}, the driver endeavors to maintain a constant speed. However, it is inevitable that the speed will exhibit minor fluctuations. This section offers a brief analysis of the effects of such speed variability. Adhering to the mathematical notation in Section \ref{sss:latency-measurement}, when the speed is variable, we denote the speed $v(t)$ as a continuous random process. Define $G(t)=\int_0^tv(t)dt$. Incorporating this into formula (\ref{eq:1}), we obtain:
$$\int_0^{\mathbf{t_2}-\mathbf{l}}v(t)dt + e_1 + \mathbf{e_2} = \int_0^\mathbf{t_1}v(t)dt$$
$$e_1 + \mathbf{e_2} = \int_0^{\mathbf{t_1} - \mathbf{t_2} + \mathbf{l}}v(t)dt$$
Given that $t_1-t_2+l$ is a small quantity, the following approximation is viable:
$$\int_0^{\mathbf{t_1} - \mathbf{t_2} + \mathbf{l}}v(t)dt \approx (v_0 + \bm{\mu}) (\mathbf{t_1} - \mathbf{t_2} + \mathbf{l})$$

Here, $\bm{\mu}$ represents a zero-mean random variable that captures the variability of speed $v(t)$ over a brief duration. Then, the variation in speed can be explicitly formulated in the following equation:
\begin{equation}
    (v_0 + \bm{\mu})(\mathbf{t_1} - \mathbf{t_2} + \bm{l}) = e_1 + \mathbf{e_2}
    \label{eq:appendix1}
\end{equation}
Taking the expectation on both sides confirms that (\ref{eq:4}) remains valid:
$$
E[\bm{\tau}] = E[\bm{l}] - \frac{e_1}{v_0}
$$

\subsection{Variance Analysis of Latency Measurement}
\label{appendix:2}
This section presents a variance analysis for the method outlined in Section \ref{sss:latency-measurement}. Employing equation (\ref{eq:appendix1}), we obtain:
\begin{equation}
    \bm{\tau} = \bm{l} - \frac{e_1 + \mathbf{e_2}}{v_0 + \bm{\mu}}
\end{equation}
When calculating the variance of both sides, we derive:
$$Var(\bm{\tau}) = Var(\bm{l}) + Var\left(\frac{e_1}{v_0 + \bm{\mu}}\right) + Var\left(\frac{\mathbf{e_2}}{v_0 + \bm{\mu}}\right)$$
This leads to the following approximation, assuming minor fluctuations in $\bm{\mu}$:
\begin{equation}
Var(\bm{\tau}) \approx Var(\bm{l}) + \frac{1}{v_0^2}Var(\mathbf{e_2})
    \label{eq:appendix2}
\end{equation}
Equation (\ref{eq:appendix2}) indicates that the variability of the measured latency $\bm{\tau}$ is influenced by both the latency and localization errors. It also suggests that choosing a higher traveling speed $v_0$ during the experiment can reduce the variance, bringing it closer to the true variance of the latency. Furthermore, as $Var(\bm{l})$ remains constant across different locations, the measured latency variance can serve as an indicator of positional accuracy in varying locations.

\subsection{Variance Analysis of Position Measurement}
\label{appendix:3}
This section provides a concise analysis of the variance associated with the estimator $\bm{\Tilde{e_d}}$, as utilized in Section \ref{sss:positioning-error}. Employing equation (\ref{eq:appendix1}), we derive:
$$Var[\bm{\Tilde{e_d}}] = Var[e_1 + \bm{e_2}] + Var[(v_0+\bm{\mu})(E[\bm{l}] - \bm{l})]$$
$$= Var[\bm{e_2}] + (v_0^2 + Var[\bm{\mu}])Var[\bm{l}]$$
Assuming a small variance of the term $\bm{\mu}$, the variance of the estimator $Var[\bm{\Tilde{e_d}}]$ can be approximated as:
\begin{equation}
    Var[\bm{\Tilde{e_d}}] \approx Var[\bm{e_2}] + v_0^2Var[\bm{l}]
    \label{eq:appendix3}
\end{equation}
This equation highlights the compounded effect of latency and positioning errors, echoing the findings in equation (\ref{eq:appendix2}). It also reveals an intrinsic relationship between (\ref{eq:appendix2}) and (\ref{eq:appendix3}), characterized by a factor of $v_0^2$.

\bibliographystyle{ieeetr}
\bibliography{bibtex}

\section{Biography}
\begin{IEEEbiography}[{\includegraphics[width=1in,height=1.25in,clip,keepaspectratio]{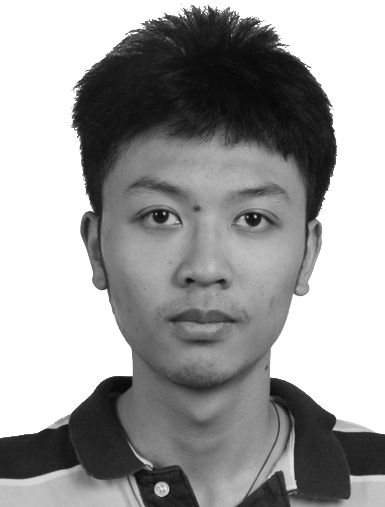}}]{Rusheng Zhang}
 received the B.E. degree in micro electrical mechanical system and second B.E. degree in Applied Mathematics from Tsinghua University, Beijing, in 2013. He received an M.S. and phD degree in electrical and computer engineering from Carnegie Mellon University, in 2015, 2019 respectively. His research areas include artificial intelligence, cooperative driving, cloud computing and vehicular networks.
\end{IEEEbiography}
\begin{IEEEbiography}[{\includegraphics[width=1in,height=1.25in,clip,keepaspectratio]{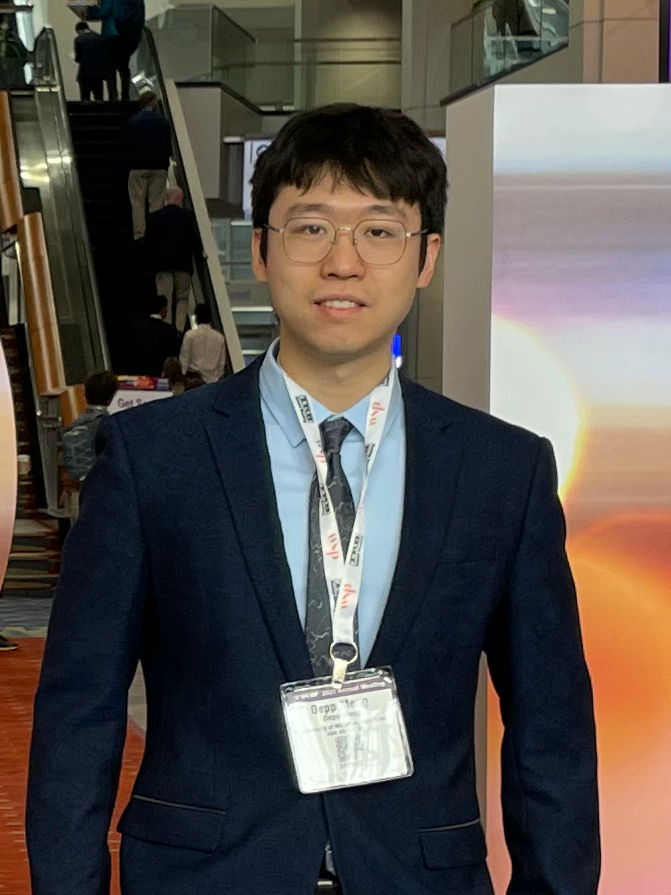}}]{Depu Meng} (Member, IEEE) is a Post Doctoral Research Fellow at the Department of Civil and Environmental Engineering, University of Michigan. He received his B. E. degree from the Department of Electrical Engineering and Information Science at the University of Science and Technology of China in 2018. He received his Ph. D. degree from the Department of Automation at the University of Science and Technology of China. His research interests include computer vision and autonomous driving systems.
\end{IEEEbiography}

\begin{IEEEbiography}[{\includegraphics[width=1in,height=1.25in,clip,keepaspectratio]{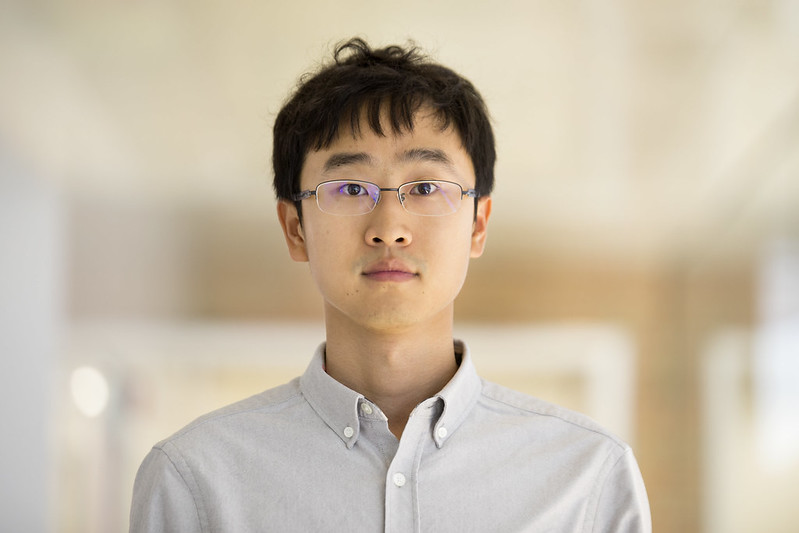}}]{Shengyin (Sean) Shen} works as a Research Engineer in the Engineering Systems Group at the University of Michigan Transportation Research Institute (UMTRI). Sean holds an MS degree in Civil and Environmental Engineering from the University of Michigan, Ann Arbor, and an MS degree in Electrical Engineering from the University of Bristol, UK. He also earned a BS degree from Beijing University of Posts and Telecommunications, China. Sean's research interests are primarily focused on cooperative driving automation and related applications that use roadside perception, edge-cloud computing, and V2X communications to accelerate the deployment of automated vehicles. He has extensive experience in implementation of large-scale deployments, such as the Safety Pilot Model Deployment (SPMD), Ann Arbor Connected Vehicle Testing Environment (AACVTE), and Smart Intersection Project. Moreover, he has been involved in many research projects funded by public agencies such as USDOT, USDOE, and companies such as Crash Avoidance Metric Partnership (CAMP), Ford Motor Company, and GM Company, among others.
\end{IEEEbiography}
\begin{IEEEbiography}[{\includegraphics[width=1in,height=1.25in,clip,keepaspectratio]{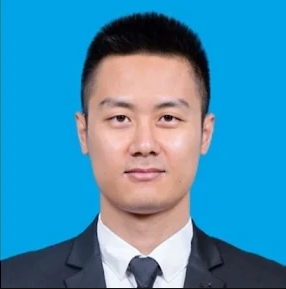}}]{Tinghan Wang} received the B.Tech. and Ph.D. degrees from Tsinghua
University, Beijing, China, in 2016 and 2022. He is currently a
research fellow with the Mechanical Engineering Department, the
University of Michigan. His research interests include automated
vehicle evaluation, cooperative driving, and end-to-end self-driving
based on deep reinforcement learning.
\end{IEEEbiography}
\begin{IEEEbiography}[{\includegraphics[width=1in,height=1.25in,clip,keepaspectratio]{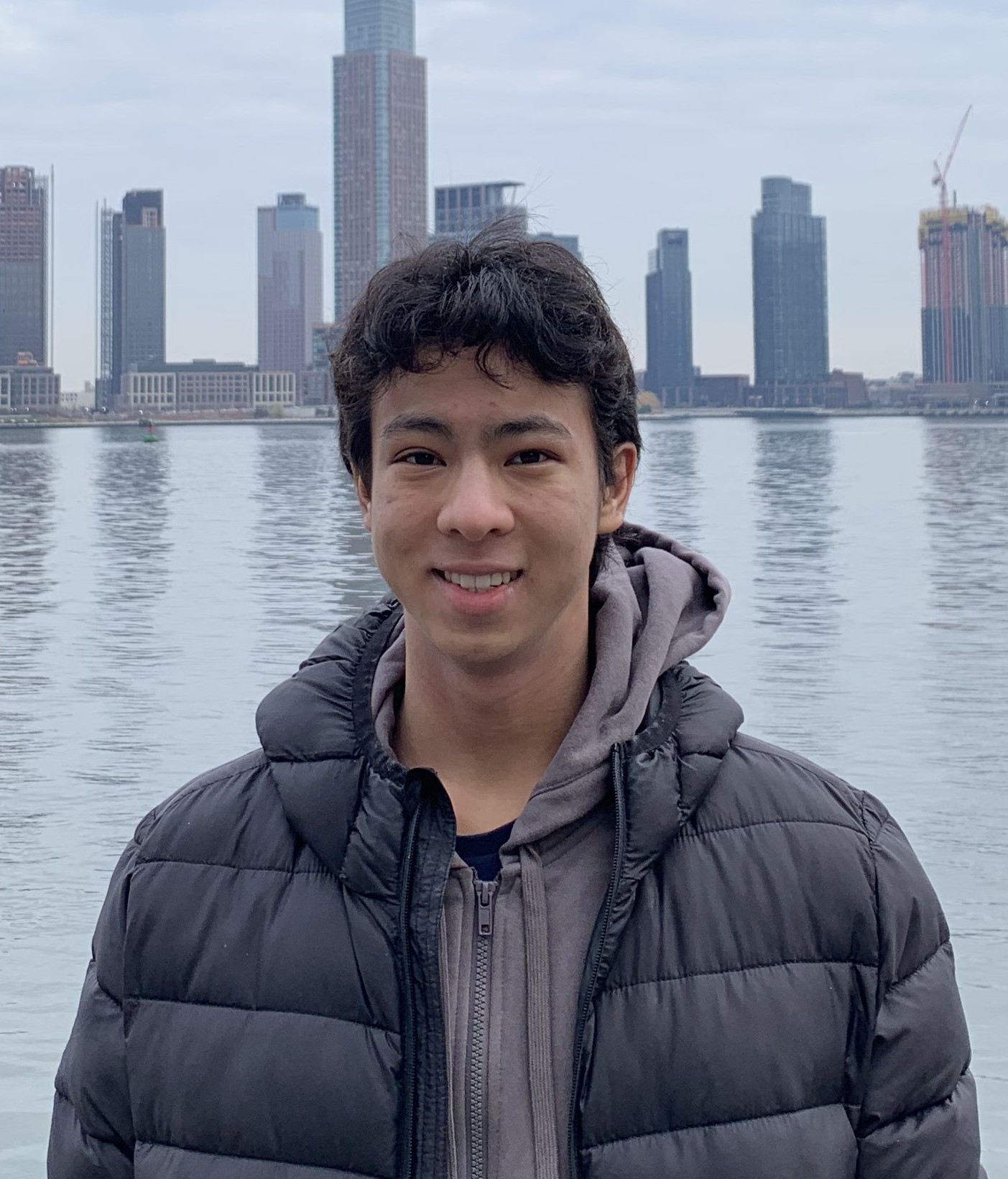}}]{Tai Karir} is a senior at Huron High School. He is a member of the soccer team and also enjoys playing piano and reading. His research interests include software engineering, data analysis and machine learning.
\end{IEEEbiography}
\begin{IEEEbiography}[{\includegraphics[width=1in,height=1.25in,clip,keepaspectratio]{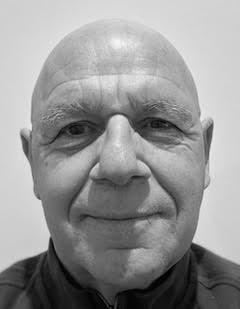}}]{Michael Maile} received the Dipl. Phys. Degree from the University of Ulm in 1986. He is currently a Principal at Ivie Communications. His research areas include Sensor Fusion and Localization for Automated Vehicles and Infrastructure-Vehicle communication based safety applications. 
\end{IEEEbiography}
\begin{IEEEbiography}[{\includegraphics[width=1in,height=1.25in,clip,keepaspectratio]{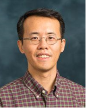}}]{Henry X. Liu}
(Member, IEEE) received the bachelor's degree in automotive engineering from Tsinghua University, China, in 1993, and the PhD. degree in civil and environment engineering from the University of Wisconsin-Madison in 2000. He is currently a professor in the Department of Civil and Environmental Engineering and the Director of Mcity at the University of Michigan, Ann Arbor. He is also a Research Professor at the University of Michigan Transportation Research Institute and the Director for the Center for Connected and Automated Transportation (USDOT Region 5 University Transportation Center). From August 2017 to August 2019, Prof. Liu served as DiDi Fellow and Chief Scientist on Smart Transportation for DiDi Global, Inc., one of the leading mobility service providers in the world. Prof. Liu conducts interdisciplinary research at the interface of transportation engineering, automotive engineering, and artificial intelligence. Specifically, his scholarly interests concern traffic flow monitoring, modeling, and control, as well as testing and evaluation of connected and automated vehicles. Prof. Liu is the managing editor of Journal of Intelligent Transportation Systems.
\end{IEEEbiography}



\end{document}